\documentclass[sigconf]{acmart}

\AtBeginDocument{%
  \providecommand\BibTeX{{%
    \normalfont B\kern-0.5em{\scshape i\kern-0.25em b}\kern-0.8em\TeX}}}

\DeclareMathOperator*{\argmax}{arg\,max}

\usepackage{subcaption}
\usepackage[linesnumbered,ruled]{algorithm2e}
\usepackage{graphicx}
\usepackage{amsmath,amsfonts}
\usepackage[mathscr]{eucal}
\usepackage{comment}
\usepackage{booktabs}
\usepackage{multirow}
\setcopyright{acmcopyright}
\copyrightyear{2022}
\acmYear{2022}
\acmDOI{10.1145/1122445.1122456}

\acmPrice{15.00}
\acmISBN{978-1-4503-XXXX-X/18/06}



\begin{document}
\newtheorem{dfn}{Definition}
\title[Large-Scale Privacy-Preserving Network Embedding against Private Link Inference Attacks]{Large-Scale Privacy-Preserving Network Embedding \\ against Private Link Inference Attacks}

\author{Xiao Han}
\affiliation{%
	\institution{Shanghai University of Finance and Economics, China}
}
\author{Leye Wang}
\affiliation{%
	\institution{Peking University, China}
}
\author{Junjie Wu}
\affiliation{%
	\institution{Beihang University, China}
}
\author{Yuncong Yang}
\affiliation{%
	\institution{Shanghai University of Finance and Economics, China}
}

%

\renewcommand{\shortauthors}{Han et al.}

\begin{abstract}
Network embedding represents network nodes by a low-dimensional informative vector. While it is generally effective for various downstream tasks, it may leak some private information of networks, such as hidden private links. In this work, we address a novel problem of \textit{privacy-preserving network embedding against private link inference attacks}. Basically, we propose to perturb the original network by adding or removing links, and expect the embedding generated on the perturbed network can leak little information about private links but hold high utility for various downstream tasks. Towards this goal, we first propose general measurements to quantify privacy gain and utility loss incurred by candidate network perturbations; we then design a PPNE framework to identify the optimal perturbation solution with the best privacy-utility trade-off in an iterative way. Furthermore, we propose many techniques to accelerate PPNE and ensure its scalability. For instance, as the skip-gram embedding methods including DeepWalk and LINE can be seen as matrix factorization with closed form embedding results, we devise efficient privacy gain and utility loss approximation methods to avoid the repetitive time-consuming embedding training for every candidate network perturbation in each iteration. 
Experiments on real-life network datasets (with up to millions of nodes) verify that PPNE outperforms baselines by sacrificing less utility and obtaining higher privacy protection. 
\end{abstract}

%

\keywords{network embedding, link inference, privacy protection}

\maketitle

\section{Introduction}
Data publishing has become imperative for comprehensive data exploration and utilization~\cite{fung2010privacy, jia2018attriguard,yang2019privacy}. Network data such as citation networks, social networks, and communication networks, is ubiquitous in human life and useful for many applications~\cite{cheng2010k, chen2018disclose, yu2019target}. 
While network structure is directly published in tradition, network embedding has become a good substitution for publishing along the prosperity of the network embedding techniques.
In particular, network embedding techniques~\cite{goyal2018graph, hamilton2017representation} typically represent each node by a low-dimensional dense vector which can be effectively used for a variety of downstream tasks like node classification \cite{wang2016structural, tang2015line, ou2016asymmetric, perozzi2014deepwalk, grover2016node2vec}, and clustering \cite{white2005spectral}. 

However, releasing network embedding gives malicious attackers opportunities to infer the private information in the network, which may lead to severe privacy leakage problems~\cite{duddu2020quantifying}. For example, many methods (\emph{e.g.,} DeepWalk~\cite{perozzi2014deepwalk}, LINE~\cite{tang2015line}, and node2vec~\cite{grover2016node2vec}) learn network embedding based on the publicly available links between nodes; nevertheless, such embedding may be used by attackers to correctly infer the hidden private links in the network~\cite{grover2016node2vec}. Due to the data regulation and privacy law (e.g., GDPR\footnote{https://gdpr.eu/eu-gdpr-personal-data}), private information should be protected from being accessed during data publishing. Hence, a privacy-preserving network embedding publishing method is urgently needed to protect the private link of a network while publishing its embedding.


Recently, some pioneering studies begin to explore the problem of privacy-preserving network embedding for publishing with different privacy protection goals. Specifically, a differentially private network embedding method is proposed with an optimization objective that the perturbed network embedding cannot be differentiated from the original~\cite{xu2018dpne}; whereas, differential privacy protection methods cannot ideally defend against private link inference attacks \cite{jayaraman2019evaluating}. Some other studies learn privacy-preserving embedding by graph neural networks via adversarial training~\cite{li2020adversarial,pmlr-v139-liao21a,wang2021privacy}; they expect that the learned embedding is useful for one target task but ineffective to other tasks. In brief, \textit{there is still lack of privacy-preserving network embedding publishing methods that can simultaneously (i) prevent inference attacks on private links, and (ii) benefit various downstream tasks such as node classification and clustering.}

We intentionally tackle the privacy-preserving network embedding publishing problem to defend against private link inference attacks. Basically, we propose to perturb the network structure (\emph{i.e.,}  add or delete links) and publish the embedding of perturbed network; we expect that the private link inference attacks become ineffective with the embedding of a proper perturbed network. However, the network perturbation generally result in data utility degradation on downstream tasks as well. Hence, the key is how to use the minimal utility loss to trade off the maximal privacy protection effect. More specifically, to find the best network perturbation solution that can maximize privacy protection and minimize utility loss, there are the following challenges:

\textbf{Privacy and utility quantification}. Given a certain network perturbation solution,  it is non-trivial to quantify its impacts on the network data utility for downstream tasks and privacy protection against private link inference attacks. In particular, as the embedding generation is a complicated process, how a network perturbation would impact embedding and further decrease the correctness of private link inference is hard to quantify.
Furthermore, as the learned embedding needs to serve for various downstream tasks, it would be favorable to have a task-independent utility metric that can effectively assess the general value of embedding in various (unknown) downstream tasks.

\textbf{Scalability for large-scale networks}. In principle, there are $O(2^{|V|^2})$ possible network perturbation solutions, where $|V|$ is the number of network nodes. For a large-scale network, it is practically intractable to select the best solution considering the privacy-utility tradeoff by enumerating all the candidate perturbations. Therefore, an efficient method is required to help find the best network perturbation solution quickly for large-scale networks.



We address the challenges and make the following contributions:

(1) To the best of our knowledge, this is the first work to study the problem of \textit{privacy-preserving network embedding publishing against private link inference attacks}. 
	
(2) To solve the problem, we design a framework called PPNE which presents three key technical merits: (i) a privacy measurement to quantify the protection effect against private link inference attacks given a perturbed network, (ii) a utility measurement to quantify the performance of various downstream tasks given a perturbed network, and (iii) an iterative network perturbation process achieving the optimal privacy-utility tradeoff. Moreover, for popular skip-gram embedding methods such as DeepWalk \cite{perozzi2014deepwalk} and LINE \cite{tang2015line}, we propose several techniques to efficiently approximate the privacy and utility measurements, which significantly improve the scalability of PPNE.

(3) We evaluate PPNE with four real-life datasets (with up to millions of nodes) and two downstream tasks (node classification and clustering), considering the embedding methods of DeepWalk and LINE. The results show that PPNE significantly outperforms baseline methods in achieving better privacy-utility tradeoff. Specifically, PPNE can work for a million-node network in 3 hours with a normal server.

\section{Problem Formulation and Preliminaries}
\label{subsec:problem}
The structure of network (e.g., social networks) is one of the most important data sources for various downstream network analysis tasks, such as node classification and clustering. Instead of directly releasing the original network structure, data holders nowadays are more and more willing to publish network embedding for world-wide data exploration and exploitation\footnote{Some embeddings are published at \url{http://snap.stanford.edu/data/web-RedditEmbeddings.html} and \url{https://projector.tensorflow.org/}},  because of the favorable properties of network embedding such as low-dimensional, informative, and task-independent. This work particularly focuses on a privacy-preserving network embedding publishing problem for an undirected network that includes both public and private links.

\begin{definition}[An observed undirected network] Let $V=\{v_i\}$ denote all nodes of a network. An observed undirected network can be denoted by $G=\left( V, E \right)$, where $E$ includes the publicly available undirected links between nodes. The adjacency matrix of the observed network $G$ is denoted by $A = \{a_{ij}\}\in \{0,1\}^{|V| \times |V|}$; $a_{ij}=1$ when there is an observed link between $v_i$ and $v_j$, and 0 otherwise. 
\end{definition}

\begin{definition}[Private link] A private link $e_p\langle v_i,v_j \rangle \in E_p$ refers to the actually existing but publicly unobservable links between nodes $v_i, v_j \in V$ of the observed network $G$. In practice, a complete network may include some private links besides of the observed network. For instance, Facebook allows users to hide their private relationships from the public.
\end{definition}


\begin{definition}[Network embedding] A network embedding method $\mathcal M$ maps a network to a low-dimensional representation, denoted as $X \in \mathbb{R}^{\left| V \right| \times d}$, $d \ll \left| V \right|$. The $i$-th row of $X$ is the $d$-dimensional representation of node $v_i$, which can be taken as an input of machine learning models for various downstream tasks.
\end{definition}

While node embedding is generally learned on the observed network~$G$, it contains rich structural information among nodes and thus may be used by attackers to infer the private links $E_p$ among nodes $V$. In other words, the private links may be leaked due to the embedding-based private link inference attacks. 

\begin{definition}[Embedding-based private link inference attack] Let $\mathcal P \subset V \times V$ denotes the set of unconnected node pairs in the observed network, which consists of the pairs ($\mathcal P_{pos}$) with unobservable private links $E_p$ and the pairs $(\mathcal P_{neg})$ without links
. An embedding-based private link inference attack aims to differentiate $\mathcal P_{pos}$ and $\mathcal P_{neg}$ from $\mathcal P$ given the network embedding $X$.
\end{definition}

\begin{definition}[Network perturbation] A network perturbation is a typical solution against private link inference attack by perturbing (\emph{i.e.,} adding or deleting) certain links given a network. The perturbed adjacency matrix $A'$ is the adjacency matrix of a perturbed network and can represent a specific network perturbation method.
\end{definition}

This work aims to produce network embedding for publishing and meanwhile defend against embedding-based private link inference attack by network perturbation methods. Based on the above definition, we formulate our research problem.

\begin{definition}[\textbf{Privacy-preserving network embedding publishing problem}] Given an observed undirected network $G = (V,E)$ with the adjacency matrix $A$, and an embedding method $\mathcal M$, we aim to find the optimal perturbed adjacency matrix $A'$ (adding or removing links from $A$), so that the released embedding $X' = \mathcal M(A')$ can concurrently achieve the following two objectives:

(i) The objective of privacy: the attacker cannot infer $\mathcal P_{pos}$ and $\mathcal P_{neg}$ from $\mathcal P$ precisely;
\begin{equation}
\max \   \textit{error}_\textit{infer }(\mathcal P_{pos}, \mathcal P_{neg} | X', \mathcal P)
\end{equation}

(ii) The objective of utility: the embedding $X' = \mathcal M(A'$) is as good as the original embedding $X = \mathcal M(A)$ for various downstream tasks such as node classification and clustering.
\begin{equation}
\min \  utility(X)-utility(X')
\end{equation}
\end{definition}

\emph{Remark}. Although there is increasing interest in preserving privacy of a graph while exploring its public information for downstream tasks, our privacy-preserving network embedding publishing problem presents some novelties. Our network embedding publishing problem is more challenging than the traditional \textit{privacy-preserving original network publishing} issues~\cite{yu2019target,Chen2020DiscloseMA,XiaoKDD2014}, as it needs to deal with the complicated transformation from a graph to embedding when defending against private link inference attack. Besides, our work is different from  the recent privacy-preserving graph representation learning studies \cite{wang2021privacy,pmlr-v139-liao21a,li2020adversarial}. Specifically, the literature aims at learning representations for one task and preventing the learned representations from being effectively used for other tasks; comparatively, our goal is to produce task-independent network embedding that can be used for various downstream tasks, and meanwhile protect the private links from being inferred by the published network embedding. In the subsequent section, we will propose methods to address the novel and challenging \textit{privacy-preserving network embedding publishing problem}.

\color{black}
\section{Methodology}
Analytically, we have $O(2^{|V|^2})$ possible network perturbation methods given a network with $|V|$ nodes, since we could either keep or flip the link status of any node pairs (\emph{i.e.,} delete the existing link between two nodes or add one if there isn't). Enumerating the perturbations to find the optimum would be intractable for a practical large-scale network. In this vein, we propose an iterative solution (namely PPNE) to discover a proper perturbed network for privacy-preserving network embedding. In each iteration, PPNE estimates the privacy gain and utility loss once the link status of any node pair is flipped, and then selectively flips the best node pairs that can lead to the maximum privacy gain and minimum utility loss. Fig.~\ref{fig:overview} overviews PPNE which includes three main components: (1) the \textit{privacy gain} computation, (2) the \textit{utility loss} quantification, and (3) the optimal network perturbation. Next, we elaborate on each of the components in Sec.~\ref{sub:toy_ppne}; we then analyze the potential scalability issue, and enhances PPNE to be able to work for very large networks including millions of nodes in Sec.~\ref{sub:large_scale_ppne}.


\begin{figure}[t!]
	\centering
	\includegraphics[width=0.44\textwidth]{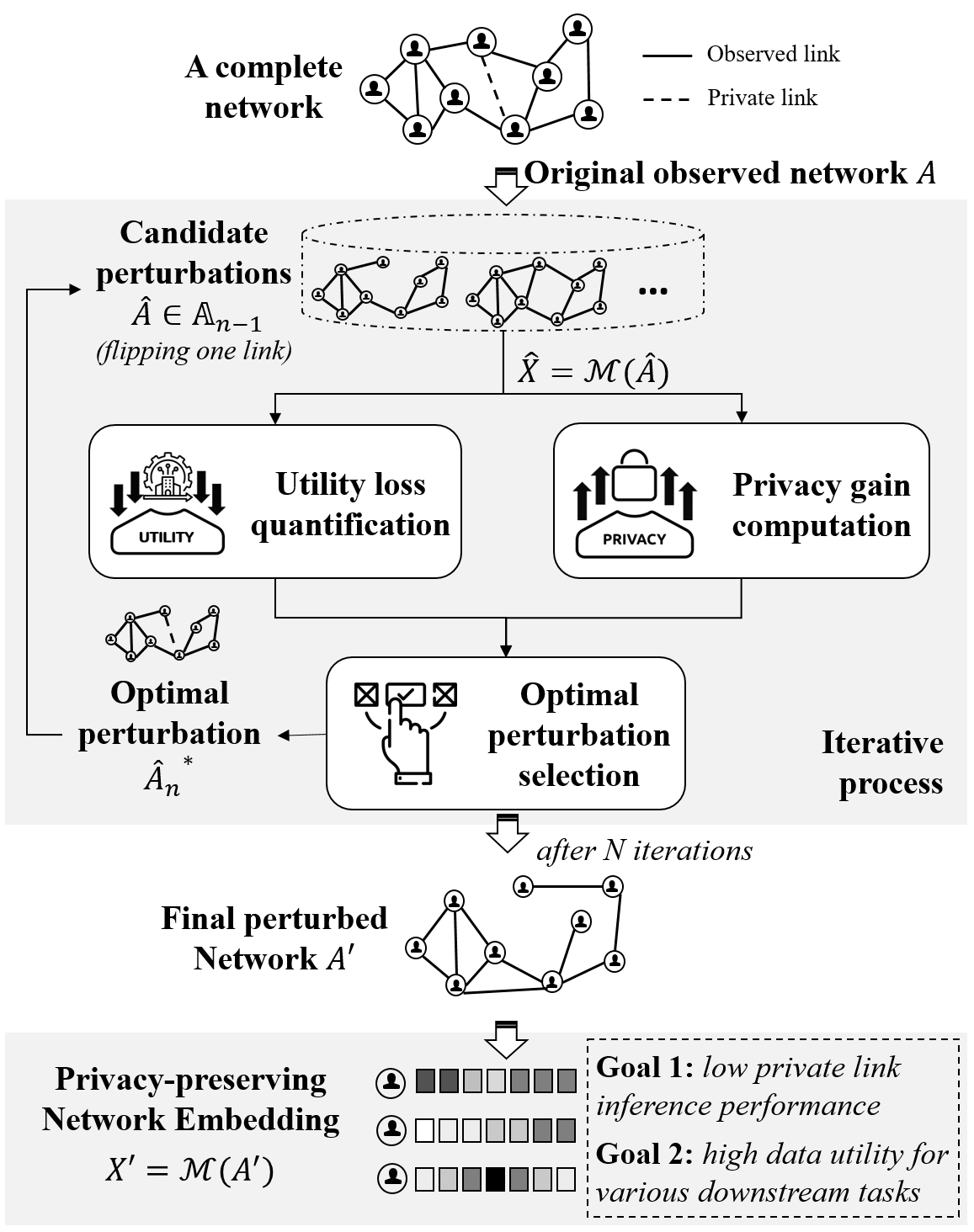}
	\vspace{-1em}
	\caption{Overview of PPNE.}
	\vspace{-1.5em}
	\label{fig:overview}
\end{figure}


\subsection{Core Design of PPNE}
\label{sub:toy_ppne}
\subsubsection{Privacy Gain Computation} 
\label{subsub:privacy_gain} Generally speaking, if network embedding $X'$ is published, attackers may perform embedding based inference attacks to correctly infer private links between nodes, which will incur privacy leakage. To compute the privacy leakage,
we first assume that the attackers only obtain the published network embedding $X'$ and will discuss the cases where they acquire some auxiliary knowledge at the end of this section. In such a context, the released embedding $\textbf{x}'_i \in X'$ of node $v_i$ can be seen as its feature vector, and the similarity-based link inference is the de-facto attack method~\cite{zhou2009predicting,yu2019target,lu2011link}. 
More specifically, the attackers compute the similarity score (\emph{i.e.,} $sim(v_i,v_j)$) of node pairs $(v_i,v_j) \in \mathcal P$ by the inner product of nodes' embedding\footnote{The inner product of nodes' normalized embedding equals their \textit{cosine} similarity, which is one of the most widely used similarity metric~\cite{levy2015improving}}, and accordingly decide whether a link exists or not between $v_i$ and $v_j$. The larger the similarity score $sim(v_i,v_j)$ is, the higher probability the link exists. In this light, we can model the embedding-based private link inference attack as:
\begin{equation}
	\textit{Attack}(v_i,v_j) = sim(\textbf{x}'_i, \textbf{x}'_j)= \textbf{x}'_i {\textbf{x}'_j}^T, \quad \forall (v_i,v_j) \in \mathcal P
	\label{eq:attack_I_def}
\end{equation}

In principle, the privacy leakage is associated with the correctness of attackers' inference. If the embedding similarities of node pairs with a private link (\emph{i.e.,} $(v_i, v_j) \in \mathcal{P}_{pos}$) are high and the value of node pairs without a private link (\emph{i.e.,} $(v_i, v_j) \in \mathcal{P}_{neg}$) are low, the attackers can easily attain a high inference correctness by using the similarity-based attack method and thus the privacy leakage would be severe; vice-versus. In this vein, we measure the privacy leakage due to the released embedding $X'$ by the embedding similarities of all node pairs in $\mathcal{P}_{pos}$ and $\mathcal{P}_{neg}$ as,
\begin{equation}
	PL(X') = \sum_{(v_i,v_j) \in \mathcal P_{pos}} \textbf{x}'_i {\textbf{x}'_j}^T - \sum_{(v_i,v_j) \in \mathcal P_{neg}} \textbf{x}'_i {\textbf{x}'_j}^T
	\label{eq:privacy_leakage}
\end{equation}
where $\sum_{(v_i,v_j) \in \mathcal P_{pos}} \textbf{x}'_i {\textbf{x}'_j}^T$ is the sum of embedding similarities between nodes in $\mathcal{P}_{pos}$, $\sum_{(v_i,v_j) \in \mathcal P_{neg}} \textbf{x}'_i {\textbf{x}'_j}^T$ aggregates the embedding similarities between nodes in $\mathcal{P}_{neg}$, and the privacy leakage $PL(X')$ is lower if the similarities of node pairs in $\mathcal{P}_{pos}$ are lower and the similarities of node pairs in $\mathcal{P}_{neg}$ are higher.

To defend against private link inference attacks, we perturb the original network $A$ and obtain the embedding $X'$ of the perturbed network $A'$ to publish. The privacy leakage is expected to be reduced by publishing $X'$ instead of releasing the embedding $X$ of original network $A$. We compute the privacy gain of publishing $X'$ over $X$ by the reduced privacy leakage as,
\begin{equation}
	PG(X'|X) =  PL(X) - PL(X')
\end{equation}

\textit{Remark}. Most of the prior related works only consider the attacks with no auxiliary knowledge~\cite{yu2019target}. Although attacks with auxiliary knowledge are also common in practice, it is hard to foreknow the auxiliary knowledge of an attacker, and unfeasible to defend against exhaustive attackers with various auxiliary knowledge. 
In this work, we note that the attacks with no auxiliary knowledge can be seen as a special case of attacks with certain auxiliary knowledge, and show that the proposed method for the attacks with no auxiliary knowledge is also effective for general attacks with auxiliary knowledge to some extent. Some detailed theoretical proofs and empirical results are presented in Appendix.

\subsubsection{Utility Loss Quantification} As the published network embedding may be used for various downstream tasks, a task-independent utility loss metric would be favorable. In general, the task-independent data utility loss caused by data alterations is measured by the expected difference between the altered data and original data. In our context, the original data is the network $A$ and the altered publishing data is the perturbed embedding $X'$. There is a data utility loss when releasing the perturbed embedding $X'$ instead of releasing $A$. We accordingly quantify the utility loss by the data difference between $X'$ and $A$, denoted as $UL(X'|A)$. However, it is very difficult to directly compute $UL(X'|A)$ by a general distance function, as $X'$ and $A$ are not in the same dimensional space. Thus we further decompose the utility loss quantification according to the chain-rule as:
\begin{equation}
	UL(X'|A) = UL(X'|X) + UL(X|A)
\end{equation}
where $UL(X'|X)$ and $UL(X|A)$ represent the data differences of $X$ to $X'$ and $A$ respectively. Fortunately, $UL(X|A)$ is a constant for a given network embedding method $\mathcal M$. Then, the goal of minimizing the data utility loss $UL(X'|A)$ is equivalent to minimizing $UL(X'|X)$, which can be quantified by:
\begin{equation}
	UL(X'|X) = d(X', X)
\end{equation}
where $d$ can be any widely used distance function such as root mean squared error.

\subsubsection{Iterative Network Perturbation} With the utility loss and privacy gain quantification methods, we describe how we iteratively identify proper link status of node pairs in the original network $A$ to perturb and finally derive the perturbed $A'$. Ideally, the embedding $X'$ generated from the perturbed network $A'$ can lead to the maximum privacy gain and minimum utility loss. Therefore, we integrate these two objectives to select the best link for perturbation and obtain the best perturbed network $\hat A^*_n$ in the $n$-th iteration as,
\begin{equation}
	\hat A^*_n = \argmax_{\hat A \in \mathbb{A}_n} \frac{PG(\hat X|X)}{[UL(\hat X|X)]^k} = \argmax_{\hat A \in \mathbb{A}_n} \frac{PG(\mathcal M(\hat A)|\mathcal M(A))}{[UL(\mathcal M(\hat A)|\mathcal M(A))]^k}
	\label{eq:pu_tradeoff}
\end{equation}
where $\hat A^*_{0} = A$, and $k$ is a parameter to tune the relative importance of utility over privacy in the selection criteria. To generate the candidate perturbed networks $\mathbb{A}_n$ for the $n$-th iteration, we only flip one link status of any unperturbed pair nodes given the best perturbed network of the previous iteration $\hat A^*_{n-1}$ for the computational simplicity and efficiency. That is saying $||\hat A-\hat A^*_{n-1}||=1, \forall \hat A \in \mathbb{A}_n$.

Algorithm~\ref{alg:ppne} illustrates the iterative network perturbation process of PPNE. It stops and outputs the final perturbed matrix $A' = \hat A^*_N$ when it achieves a sufficient number of iterations or an expected privacy gain (\emph{i.e.,} $PG(\mathcal M(A^*_N)|\mathcal M(A))$). The embedding $X'$ of $A'$ is published as the privacy-preserving network embedding. 

\begin{algorithm}[t]
	\SetKwInOut{Input}{Input}
	\SetKwInOut{Output}{Output}
	\Input{
		$A$: original network;
		$\mathcal M$: embedding method.
	}
	\Output{$X'$: privacy-preserving network embedding.}
	$\hat A^*_0 \leftarrow A$ \;
	$X = \mathcal M(A)$\;
	\For{$n \in \{1 ... N\}$}{
		
		$\mathbb A_n \leftarrow  \{\hat A|\ ||\hat A-\hat A^*_{n-1}||=1\}$\;
		\For{$\hat A \in \mathbb A_n$}{
		$\hat X \leftarrow \mathcal M(\hat A)$\; \label{line:embedding_recalculation}
		$PG(\hat X|X) \leftarrow  PL(X) - PL(\hat X)$\;
		$UL(\hat X|X) \leftarrow d(\hat X, X)$\;

		}
		$\hat A^*_n \leftarrow \argmax_{\hat A \in \mathbb{A}_n} \frac{PG(\hat X|X)}{[UL(\hat X|X)]^k}$
	}
	$X' \leftarrow \mathcal M(\hat A^*_N)$
	\caption{Core Design of PPNE}
	\label{alg:ppne}
\end{algorithm}


\subsection{Toward A Million-Node Network}
\label{sub:large_scale_ppne}

So far, we have highlighted the core components in PPNE. However, the scalability of Algorithm~\ref{alg:ppne} is limited. Specifically, for each iteration, we need to train the embedding $\hat X$ for every perturbation $\hat A \in \mathbb A_n$ (line~\ref{line:embedding_recalculation} in Algorithm~\ref{alg:ppne}). Since training embedding for one large network is often time-consuming, it is intractable to train $\hat X$ for all possible $\hat A$. For example, with a network including 10,000 nodes, each iteration needs to train embedding for around 100,000,000 times to enumerate all possible $\hat A$; this is practically impossible. Here, with the core design considerations in mind, we propose several techniques to enhance the scalability of PPNE, so that it can work for a large-scale network with millions of nodes.


\subsubsection{Training-free Privacy Gain Computation} Recall that the privacy gain in each iteration is modeled as,
\begin{equation}
	PG(\hat X|X) =  PL(X) - PL(\hat X)
\end{equation}
Concerning that $PL(X)$ is a fixed value given a certain network, thus only $PL(\hat X)=PL(\mathcal M(\hat A))$ is necessarily computed to determine $PG(\hat X|X)$. As aforementioned, it would be quite time-consuming to train the embedding $\mathcal M(\hat A)$ for each $\hat A \in \mathbb A_n$; therefore the computational efficiency of $PL(\mathcal M(\hat A))$ is  severely limited. What's worse, PPNE needs to compute $PL(\mathcal M(\hat A))$ for a huge number of candidate perturbation matrices in each iteration. Therefore, we intend to accelerate the efficiency of $PL(\mathcal M(\hat A))$ computation.

Note that, for each $\hat A \in \mathbb A_n$, there is a subtle change (\emph{i.e.,} only one element difference) from $\hat A^*_{n-1}$. Inspired by \citet{sun2018data}, if we could remove the embedding training process for every $\hat A \in \mathbb A$, and directly estimate $PL(\mathcal M(\hat A))$ from $PL(\mathcal M(\hat A^*_{n-1}))$ 
based on the subtle change between $\hat A$ and $\hat A^*_{n-1}$, the computational time would be dramatically reduced. Based on this idea, we design a \textit{privacy-oriented gradient} (POG) method to deduce $PL(\mathcal M(\hat A))$ for every $\hat A \in \mathbb A_n$ given $\hat A^*_{n-1}$ and $PL(\mathcal M(\hat A^*_{n-1}))$.



Specifically, we use $\nabla_{a_{ij}} PL(\mathcal M(\hat A^*_{n-1}))$ to denote the difference of $PL(\mathcal M(\hat A))$ to $PL(\mathcal M(\hat A^*_{n-1}))$,  where $a_{ij}$ is the only different element between $\hat A$ and $\hat A^*_{n-1}$. Then, we have
\begin{equation}
	PL(\mathcal M (\hat A)) = PL(\mathcal M (\hat A^*_{n-1})) + \nabla_{a_{ij}} PL(\mathcal M(\hat A^*_{n-1})) \cdot \delta_{ij}
\end{equation}
where $\delta_{ij} = \pm 1$, representing that $\hat A$ adds/removes the link $\langle v_i, v_j \rangle$ given $\hat A^*_{n-1}$. Then, calculating $\nabla_{a_{ij}} PL(\mathcal M(\hat A^*_{n-1})), \forall a_{ij} \in \hat A^*_{n-1}$, denoted as $\nabla_{\hat A^*_{n-1}} PL(\mathcal M(\hat A^*_{n-1}))$, is the key issue of the POG method. With the chain rule, we infer that,
\begin{equation}
\nabla_{\hat A^*_{n-1}} PL(\mathcal M(\hat A^*_{n-1})) =  \nabla_{\hat A^*_{n-1}} \hat X^*_{n-1} \cdot \nabla_{\hat X^*_{n-1}} PL(\hat X^*_{n-1})
\end{equation}
where $\hat X^*_{n-1} = \mathcal M(\hat A^*_{n-1})$. While $\hat A^*_{n-1}$ and $\hat X^*_{n-1}$ are known in the $n$-th iteration, $\nabla_{\hat X^*_{n-1}} PL(\hat X^*_{n-1})$ can be easily computed by backpropagation. However, it is still non-trivial to efficiently compute $\nabla_{\hat A^*_{n-1}} \hat X^*_{n-1} = \nabla_{\hat A^*_{n-1}} \mathcal M (\hat A^*_{n-1})$ as the embedding $\mathcal M$ is often a complicated computation process. 

While designing an efficient and general computation method for an arbitrary embedding method is almost impossible, we notice that a stream of 
popular skip-gram embedding methods, such as DeepWalk \cite{perozzi2014deepwalk} and LINE \cite{tang2015line}, present very promising theoretical properties with embedding in closed forms \cite{qiu2018network}. This gives the opportunities to efficiently compute $\nabla_{\hat A^*_{n-1}} \mathcal M (\hat A^*_{n-1})$. Specifically, these methods are equivalent to the factorization of a certain matrix, 
\begin{equation}
    Z = XY^T
\end{equation}
where $X$ is equivalent to $\mathcal M(A)$, and the matrix $Z$ is derived from the network adjacency matrix $A$. For DeepWalk,
\begin{equation}\label{eq:deepwalk}
Z=\log (vol(A)(\dfrac{1}{T} \sum_{r=1}^{T} (D^{-1}A)^r)D^{-1}) - \log b,
\end{equation}
For LINE, 
\begin{equation}\label{eq:line}
Z=\log (vol(A)(D^{-1}AD^{-1})) - \log b,
\end{equation}
For more details of $Z$, readers can refer to \citet{qiu2018network}. In this vein, supposing that we have the matrix $\hat Z^*_{n-1}$ of $\hat A^*_{n-1}$, we can infer,
\begin{equation}
    \nabla_{\hat A^*_{n-1}} \hat X^*_{n-1} = \nabla_{\hat Z^*_{n-1}} \hat X^*_{n-1} \cdot \nabla_{\hat A^*_{n-1}} \hat Z^*_{n-1}
\end{equation}
The two components in the right side of the equation can both be efficiently computed with the first order KKT (Karuch-Kuhn-Tucker) condition \cite{sun2018data}, and thus $\nabla_{\hat A^*_{n-1}} \hat X^*_{n-1}$ is computed.

In a word, we develop a fast training-free method to estimate $PG(\mathcal M (\hat A)|\mathcal M (A)), \forall \hat A \in \mathbb A_n$ by $\nabla_{\hat A^*_{n-1}} PL(\mathcal M(\hat A^*_{n-1}))$. Specifically, the estimated privacy gain of $\hat A \in \mathbb A_n$ over $\hat A^*_{n-1}$ is,
\begin{equation}
 \hat{PG} (\mathcal M (\hat A)|\mathcal M (\hat A^*_{n-1})) = - \nabla_{a_{ij}} PL(\mathcal M(\hat A^*_{n-1})) \cdot \delta_{ij}, \forall \hat A \in \mathbb A_n
 \label{eq:new_privacy_gain}
\end{equation}
where $\hat A$ is different from $\hat A^*_{n-1}$ only at $a_{ij}$.

\subsubsection{Training-free Utility Loss Computation} Clearly, it is also extremely inefficient to compute utility loss by training the embedding for every $\hat A \in \mathbb A_{n}$ in each iteration of PPNE. 
Here, we further design an efficient training-free utility loss computation method for the skip-gram embedding methods including DeepWalk and LINE.

Specifically, for $n$-th iteration's $\hat A \in \mathbb A_{n}$, we decompose the utility loss as the sum of the embedding loss $\mathcal L(\mathcal M(\hat A); \hat A)$ and the perturbation loss $||\hat A - \hat A^*_{n-1}||$,
\begin{equation}
	UL(\mathcal M (\hat A)|\mathcal M (\hat A^*_{n-1})) = \mathcal L(\mathcal M (\hat A); \hat A) + ||\hat A - \hat A^*_{n-1}|| 
\end{equation}
For any $\hat A \in \mathbb A_n$, $||\hat A - \hat A_{n-1}^*||$ is the same (different by one link). Directly computing the embedding loss $\mathcal L(\mathcal M(\hat A); \hat A)$ for all $\hat A \in \mathbb{A}_n$ still needs a lot of time. 

Fortunately, given a network $A$, the previous work \cite{bojchevski2019adversarial} deduces an analytic equation that can efficiently approximate the DeepWalk embedding loss of another network $\hat A$, if $\hat A$ is different from $A$ in only one link,
\begin{equation}
    \hat{\mathcal{L}}(\mathcal M(\hat A); \hat A)=\dfrac{vol(A) + 2\Delta w_{ij}}{T \cdot b} [\sum\nolimits_{p=K+1}^{|V|}\tilde{\sigma}_{p}^{2}]^{1/2}
    \label{eq:deepwalk_loss_est}
\end{equation}
where $\Delta w_{ij}=\pm 1$ represents that $\hat A$ adds or removes $\langle v_i, v_j \rangle$ from $A$; $\tilde{\sigma}_{p}=\frac{1}{d_{min}} \cdot |\sum_{r=1}^{T} (\tilde{\lambda}_{\pi (p)})^r|$, $\tilde{\lambda}$ is the approximation of the generalized eigenvalue $\lambda$ of $\hat A$ by solving $\hat A u=\lambda D u$; $\pi$ is a permutation ensuring that the $\tilde{\sigma_{p}}$ are sorted decreasingly, and $d_{min}$ is the smallest degree in $\hat A$. For details, readers can refer to the work \cite{bojchevski2019adversarial}.

We note again that $\hat A$ differs from $\hat A_{n-1}^*$ with only one element. Therefore, for DeepWalk embedding, we propose to leverage the above idea from \cite{bojchevski2019adversarial} to quickly approximate $\mathcal L(\mathcal M (\hat A); \hat A)$ without explicitly training $\mathcal M(\hat A)$.

It is worth noting that, the basic technique used in \cite{bojchevski2019adversarial} is again converting the DeepWalk embedding training to the matrix factorization \cite{qiu2018network}. Meanwhile, from Eq.~\ref{eq:deepwalk} and \ref{eq:line}, we can know that LINE is a special case of DeepWalk by setting $T=1$ from the matrix factorization's perspective. Hence, we can obtain the approximate embedding loss of LINE by setting $T=1$ in Eq.~\ref{eq:deepwalk_loss_est},
\begin{equation}
    \hat{\mathcal{L}}(\mathcal M(\hat A); \hat A)=\dfrac{vol(A) + 2\Delta w_{ij}}{b} [\sum\nolimits_{p=K+1}^{|V|}\tilde{\sigma}_{p}^{2}]^{1/2}
    \label{eq:line_loss_est}
\end{equation}
By setting $A$ to $\hat A_{n-1}^*$ in Eq.~\ref{eq:deepwalk_loss_est}/\ref{eq:line_loss_est}, we can efficiently approximate the DeepWalk/LINE embedding loss for all $\hat A \in \mathbb A_n$ without explicitly training $\mathcal M(\hat A)$. Then, the estimated utility loss of $\hat A$ over $\hat A^*_{n-1}$  is,
\begin{equation}
    \hat{UL} (\mathcal M (\hat A)|\mathcal M(\hat A_{n-1}^*) = \hat{\mathcal{L}}(\mathcal M(\hat A); \hat A),\ \forall \hat A \in \mathbb A_n
    \label{eq:new_utility_loss}
\end{equation}

\subsubsection{Fast Optimal Perturbation Selection}

By combining the efficient estimation of the privacy gain (Eq.~\ref{eq:new_privacy_gain}) and utility loss (Eq.~\ref{eq:new_utility_loss}), we can select the optimal $\hat A \in \mathbb A_n$ in the $n$-th iteration by,
\begin{equation}
	\hat A^*_n = \argmax_{\hat A \in \mathbb{A}_n} \frac{\hat {PG}(\mathcal M (\hat A)|\mathcal M(\hat A_{n-1}^*))}{[\hat{UL}(\mathcal M (\hat A)|\mathcal M(\hat A_{n-1}^*))]^k}
	\label{eq:pu_tradeoff_new}
\end{equation}
With Eq.~\ref{eq:pu_tradeoff_new}, we do not need to train embedding $\mathcal M(\hat A)$ for each candidate perturbation $\hat A \in \mathbb{A}_n$. Instead, only the embedding of $\mathcal M(\hat A_{n-1}^*)$ is needed for computing Eq.~\ref{eq:pu_tradeoff_new}, which significantly boosts the scalability of PPNE.

\subsubsection{Perturbation Sampling and Batching}

The aforementioned fast approximations on privacy gain and utility loss have eliminated a large number of embedding trainings. However, if the network is huge, e.g., up to millions of nodes, the computation complexity is still high as the number of candidate network perturbations in each iteration is $O(|V|^2)$. We propose to further reduce the computation overhead by the \textit{perturbation sampling and batching} strategy.

\textbf{Perturbation Sampling}. In each iteration, we randomly sample a set of perturbations from all the possible ones, and the utility loss  and privacy gain computation processes are carried out only on the sampled perturbations. 
In brief, PPNE with a larger sampling size would achieve better privacy-utility tradeoff, but with higher computation overhead. 


\textbf{Perturbation Batching}. In each iteration, instead of selecting the best (top-$1$) perturbation, we select a batch of advantageous (the top-$f$) perturbations based on the privacy gain and utility loss to further accelerate PPNE. A larger $f$ could significantly reduce the number of iterations to achieve a expected privacy gain.

Finally, we note that the computation of PPNE is easy to be parallelized as the computations of the privacy gain and utility loss for different network perturbations are independent of each other. 
\section{Evaluation}
\subsection{Experimental Setups}
\subsubsection{Datasets}
We use four well-known datasets to evaluate PPNE, including Cora~\cite{sen2008collective}, Citeseer~\cite{sen2008collective}, PubMed~\cite{dai2018adversarial}, and Flickr~\cite{tang2015line}. In each dataset, we construct a set of target node pairs (\emph{i.e.,} $\mathcal P$) by randomly sampling a portion of links as private links for positive node pairs (\emph{i.e.,} $\mathcal P_{pos}$) and the same number of unlinked node pairs as the negative node pairs (\emph{i.e.,} $\mathcal P_{neg}$). We respectively use DeepWalk~\cite{perozzi2014deepwalk} and LINE~\cite{tang2015line} to generate network embedding. The dataset characteristics and the experimental parameters are summarized in Table ~\ref{tbl:datasetstats}. All experiments are run on a server with 24-core Intel CPU, 256 GB RAM and Tesla V100S GPU. The operation system is Ubuntu 18.04.5 LTS. We implement PPNE with Python 3.7.

\begin{table}[t]
    \centering
	\scriptsize
	\renewcommand*{\arraystretch}{1.1}
	\caption{Dataset statistics and default settings of parameters}
	\vspace{-1.5em}
    \begin{tabular}{c|ccc|cccc}
    \toprule
    Dataset & Nodes & Links & Labels & |$\mathcal P_{pos}|$/$|\mathcal P_{neg}|$ & $k$ & $s$ & $f$  \\ \midrule
    Cora & 2,708 & 5,429 & 6 & 10\% & 1 & 10,000 & 1 \\
    Citeseer & 3,327 & 4,732 & 7 & 10\% & 1 & 10,000 & 1 \\
    PubMed & 19,717 & 44,338 & 3 & 2\% & 1 & 50,000 & 100 \\
    Flickr & 1,715,255 & 15,555,042 & 5 & 3\% & 1 & 500,000 & 10,000 \\
    \bottomrule
    \multicolumn{8}{l}{$k$: the relative utility importance over privacy~in Eq.~\ref{eq:pu_tradeoff_new}.}\\
    \multicolumn{8}{l}{$s$: the perturbation sampling size regarding PPNE acceleration.}\\
    \multicolumn{8}{l}{$f$: the perturbation batch size regarding PPNE acceleration.}
	\end{tabular}
	\label{tbl:datasetstats}
	\vspace{-2.5em}
\end{table}

\subsubsection{Link inference attacks} We specify two attack scenarios with respect to attackers' auxiliary knowledge, i.e., attackers with/without auxiliary knowledge. While we leave the experiments regarding the attackers with some auxiliary knowledge in Appendix due to the page limitation, we assume attackers only acquire the published network embedding but no extra information in the main text. In particular, the attackers compute a private link probability for each target node pair based on the cosine similarities of their embedding by a sigmoid function. 
The performance of link inference attacks is often measured by \textit{Average Precision Score (AP score)}, which  calculates the weighted mean of precisions at different thresholds of the precision-recall curve. As we intend to protect private links from being correctly inferred, we use $1-AP\ score$ to quantify the privacy protection effect. Larger $1- AP\ score$ indicates better privacy protection and less privacy leakage.


\subsubsection{Data utility applications}
The released network embedding should retain utility as much as possible for downstream tasks. In this work, we consider two popular applications for network embedding utility evaluation.

\textbf{Node classification.} It aims to correctly classify nodes based on a proportion of labeled nodes by a supervised learning paradigm. Specifically, we randomly choose 70\% of nodes as the training set and take the remaining 30\% nodes' labels for testing. Following the most prior studies \cite{grover2016node2vec,tang2015line,perozzi2014deepwalk}, we use logistic regression as the classification model and $F1\ score$ to assess the performance of node classification. Concerning the $F1\ score$ of a perfect node classification could equal $1$, we take $1-F1\ score$ to measure the utility loss of the perturbed embedding.

\textbf{Node clustering.} We also run an unsupervised learning to cluster nodes based on their embedding similarity. It is often hard to evaluate the performance of node clustering directly due to the lack of its ground truth. Fortunately, our goal is to verify that the clustering performance would not decrease much when the perturbed network embedding substitutes the original one. Therefore, we assess the difference of clustering result based on the perturbed network embedding to that using the original one by \textit{Normalized Mutual Information (NMI)} between the clustering results. NMI equals $1$ if the clustering results based on the perturbed embedding is equivalent to those of the original embedding. Thus, we use $1-NMI$ to assess the utility loss caused by the network perturbations.

\subsubsection{Baselines}
We compare PPNE to extensive baselines as:

$\bullet$ \textit{Random}~\cite{bojchevski2019adversarial}. This method generates a perturbed network $A'$ by randomly adding or deleting links given network $A$.

$\bullet$ \textit{Degree}~\cite{chang2020restricted}. This approach adds or removes a link between two nodes based on the sum of their degree. The link between two nodes is more likely deleted if their degree sum is lower, and a link is more likely added between two nodes without a public link if their degree sum is larger.

$\bullet$ \textit{Betweenness}~\cite{ma2020towards}. This approach ranks the overall links in a network by their betweenness centrality and delete links in order.

$\bullet$ \textit{DICE}~\cite{zugner2019adversarial}. This approach only deletes links that connect to the nodes with some private links, and adds links between the nodes with no private links.

$\bullet$ \textit{DP}~\cite{sala2011sharing}. It first adds noise to the degrees of the original network and then generates a differentially private network based on the perturbed node degrees. We use the differentially private network as the perturbed network to generate embedding for publishing.

$\bullet$
\textit{DPNE}~\cite{xu2018dpne} Instead of perturbing the original network, DPNE takes network embedding as a matrix factorization problem, and adds random noise to the objective function to generate differentially private network embedding.

Note that \textit{Random}, \textit{Degree}, \textit{Betweenness}, \textit{DICE} and \textit{DP} are network perturbation methods as PPNE. We use these methods to perturb the original network and then take the same procedure to produce the perturbed embedding accordingly for comparisons. The embedding generated by \textit{DPNE} is directly used to compare with.

\begin{figure*}[t]
	\centering
	\begin{subfigure}[b]{0.23\textwidth}
		\includegraphics[width=\textwidth]{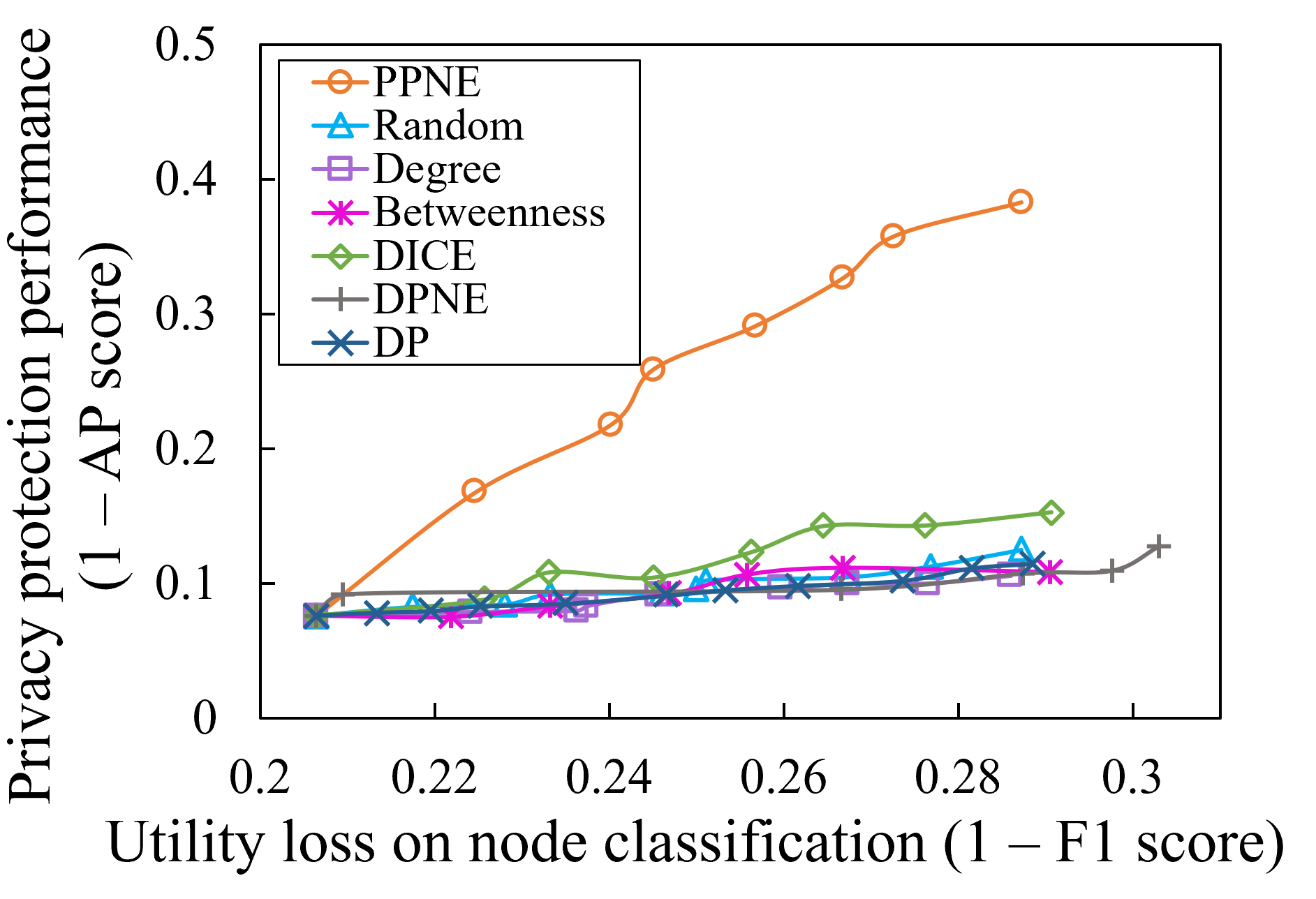}
		\caption{Classification (DeepWalk)}
		\label{fig:DWSINC}
	\end{subfigure}
	\quad
	\begin{subfigure}[b]{0.23\textwidth}
		\includegraphics[width=\textwidth]{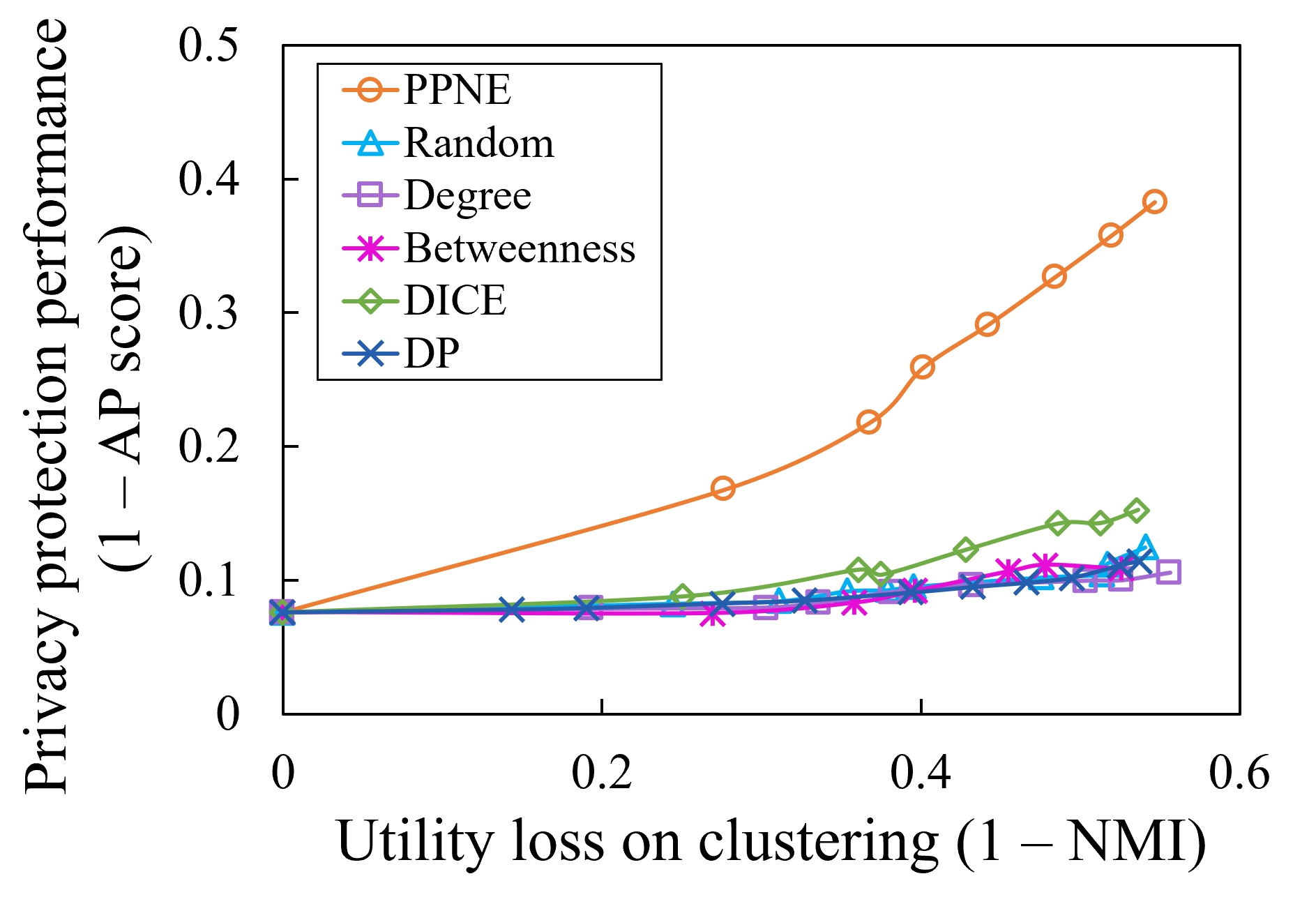}
		\caption{Clustering (DeepWalk)}
		\label{fig:DWSICL}
	\end{subfigure}
	\quad
	\begin{subfigure}[b]{0.23\textwidth}
		\includegraphics[width=\textwidth]{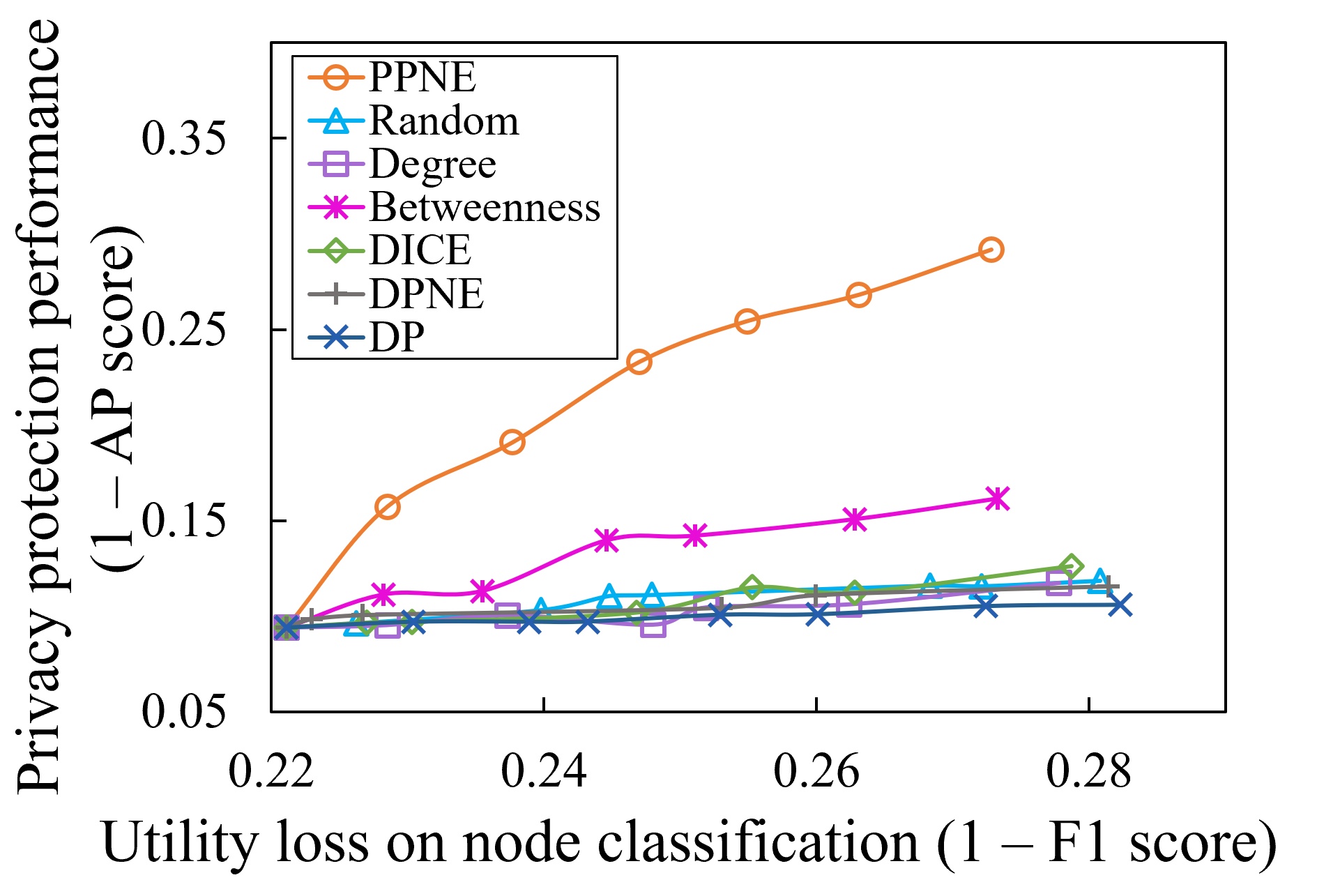}
		\caption{Classification (LINE)}
		\label{fig:LINESINC}
	\end{subfigure}
	\quad
	\begin{subfigure}[b]{0.23\textwidth}
		\includegraphics[width=\textwidth]{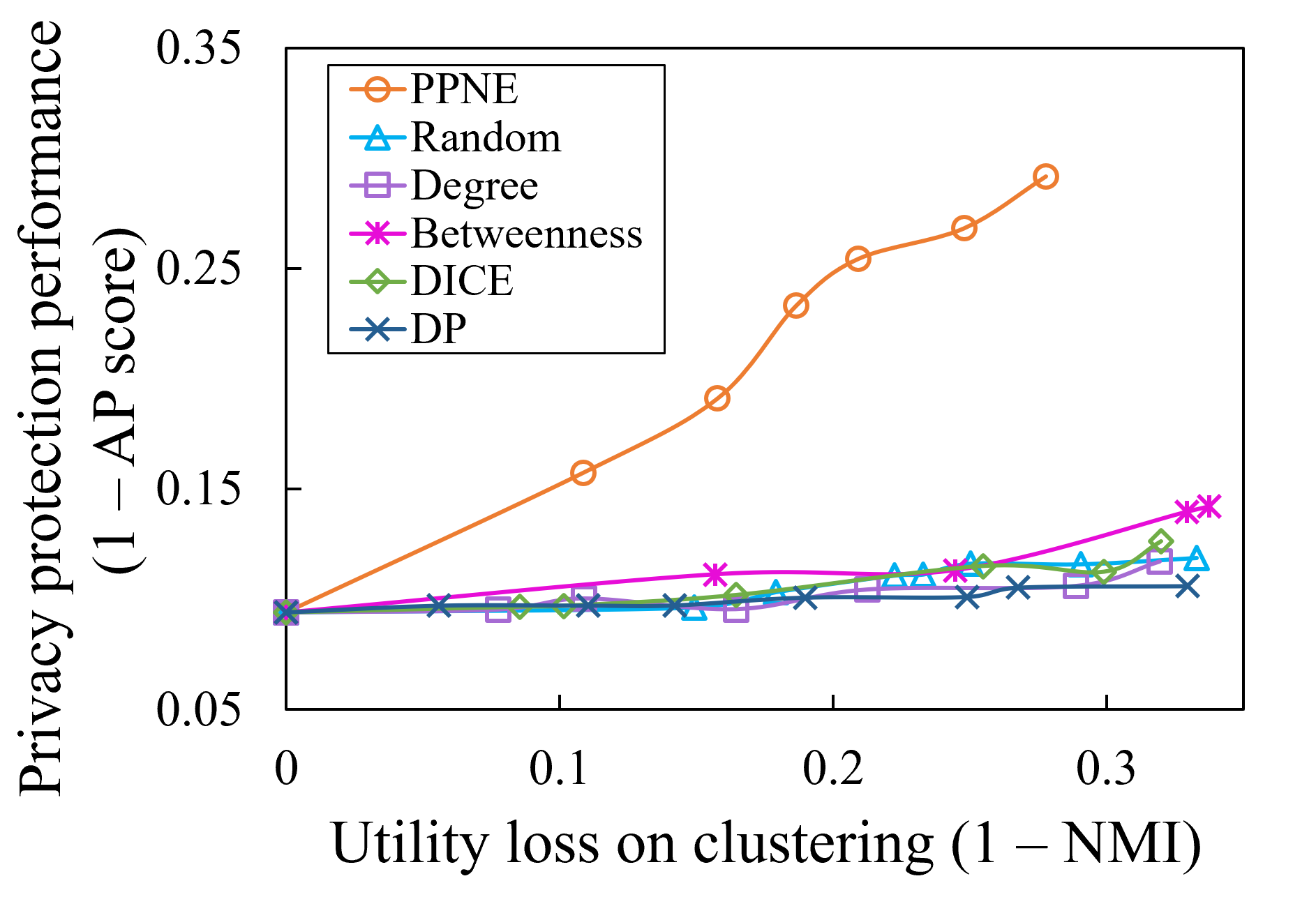}
		\caption{Clustering (LINE)}
		\label{fig:LINESICL}
	\end{subfigure}
	\vspace{-1em}
	\caption{Tradeoff on Cora: protection on private links and node classification/clustering performance }\label{fig:TradeOffDW}
	\vspace{-1em}
\end{figure*}


\begin{figure*}[t]
    \begin{subfigure}[b]{0.23\textwidth}
        \includegraphics[width=\textwidth]{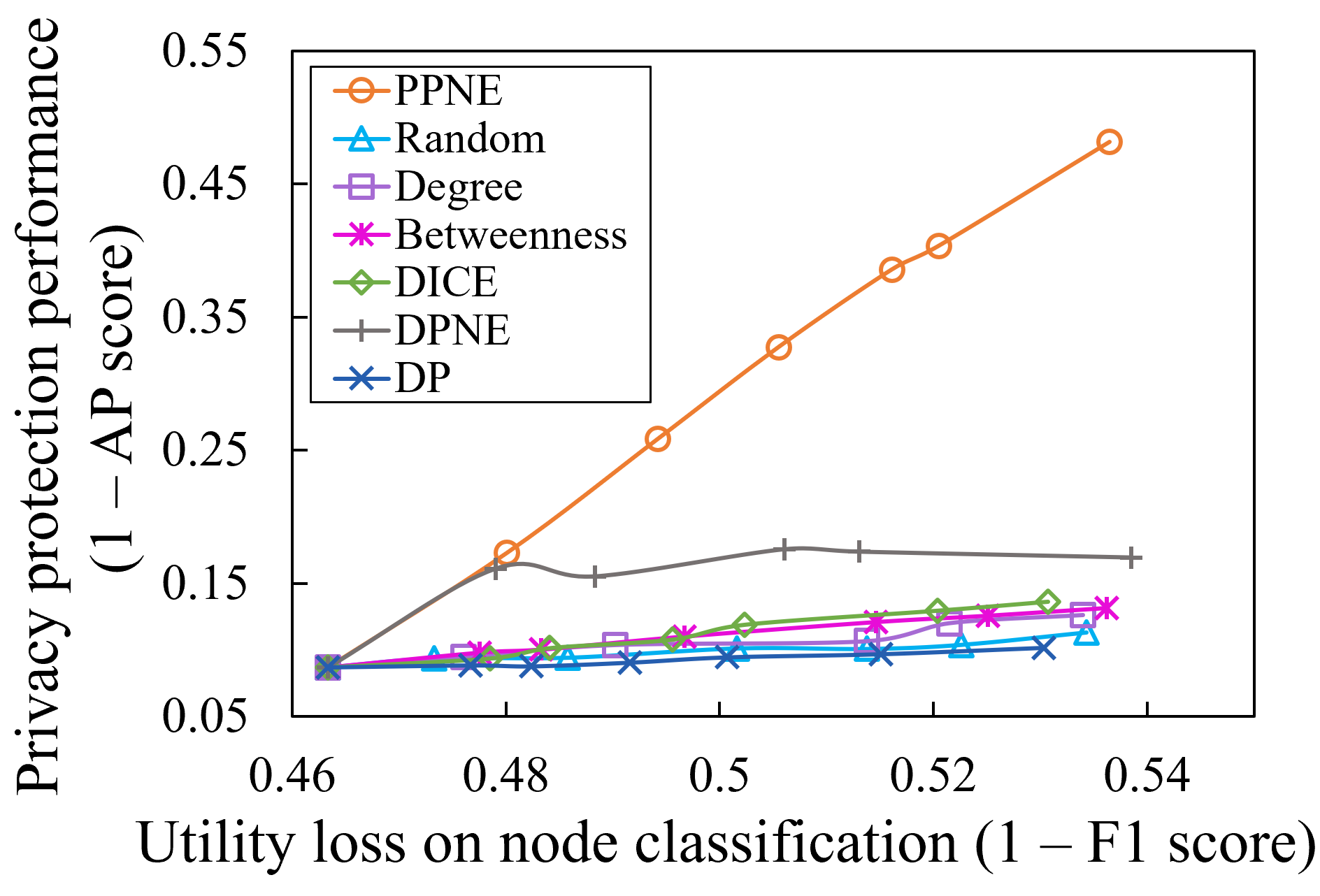}
        \caption{Citeseer}
    \end{subfigure}
    \quad \quad \quad
    \begin{subfigure}[b]{0.23\textwidth}
        \includegraphics[width=\textwidth]{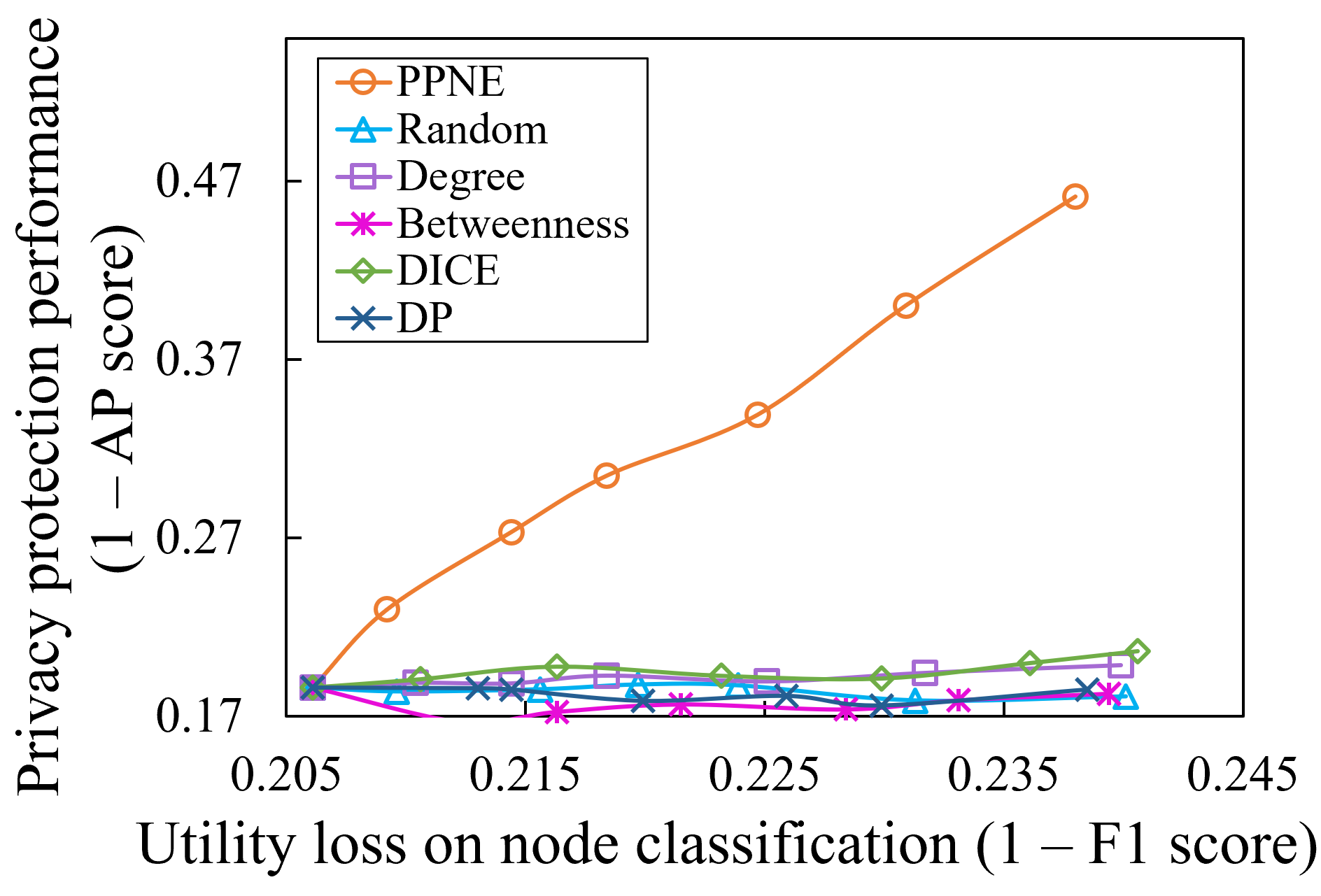}
        \caption{PubMed}
    \end{subfigure}
    \quad \quad \quad
    \begin{subfigure}[b]{0.23\textwidth}
        \includegraphics[width=\textwidth]{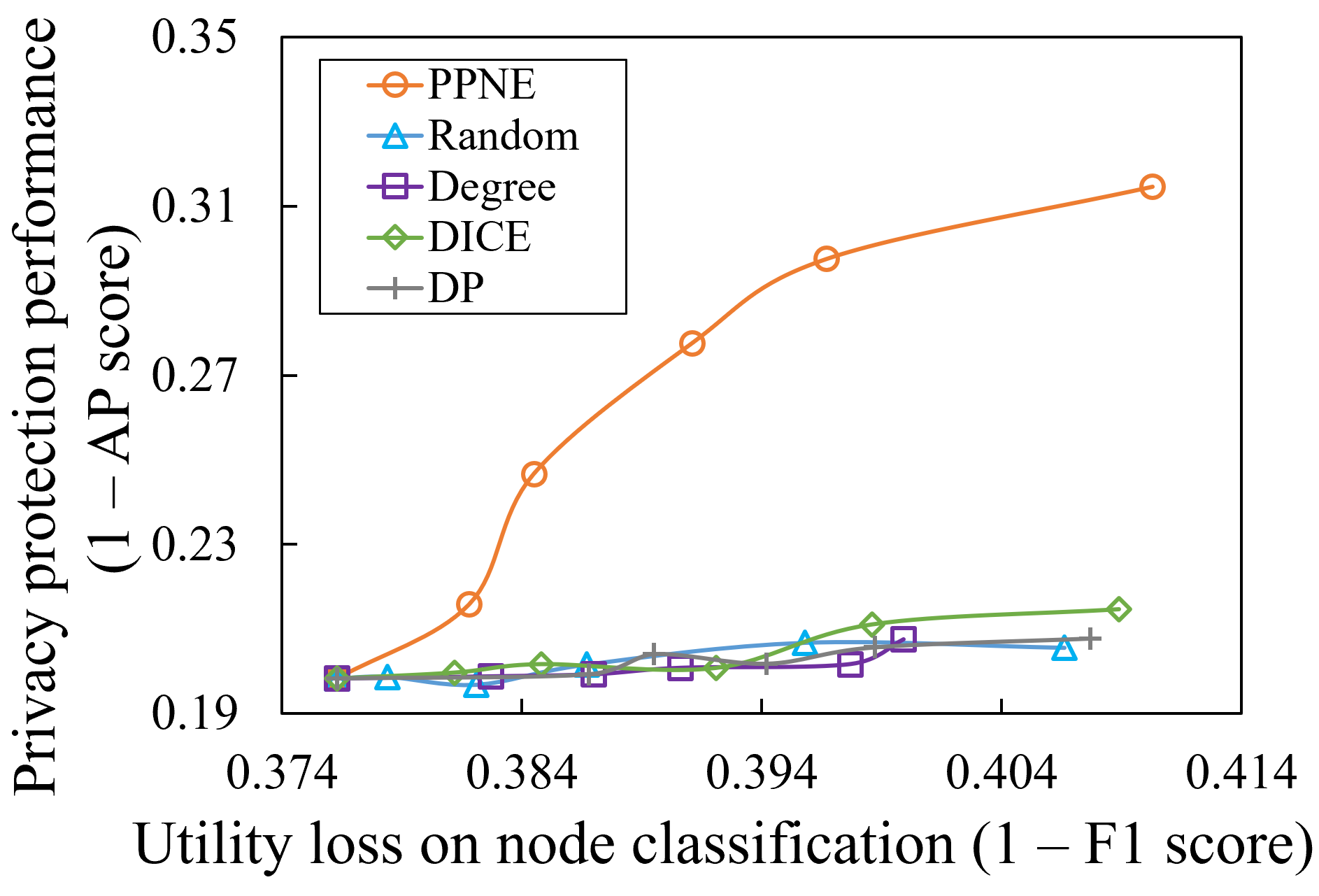}
        \caption{Flickr}
    \end{subfigure}
    \vspace{-1em}
    \caption{privacy protection on private links and node classification performance on other datasets (LINE)}
    \label{fig:other_data}
    \vspace{-1em}
\end{figure*}

\subsection{Experimental Results}

\subsubsection{Comparison with baselines}

Fig.~\ref{fig:TradeOffDW} compares the baselines with PPNE on Cora, in which the increase of value on y-axis represents the better privacy protection while a larger value on x-axis indicates a higher utility loss. 
From the experimental results, we find that PPNE and the baselines show the same trend that privacy protection effect is improved with the decline in utility level, which verifies that there is a tradeoff between the privacy protection and data utility. More importantly, we observe that PPNE always outperforms the baselines under various experimental settings (\emph{i.e.,} different applications and embedding methods). In other words, PPNE can offer stronger privacy protection by sacrificing the same amount of utility and thus achieves the best privacy-utility tradeoff. In particular, in Fig. \ref{fig:DWSINC}, when the utility level on node classification task decreases to around 0.24, the privacy protection performance brought by PPNE can be improved by 0.2 compared to initial point, while the privacy protection of other baselines are only increased by around 0.02. In Fig. \ref{fig:DWSICL}, we observe that when the (1-NMI) value is 0.4 on clustering task, the privacy protection provided by PPNE is higher than other baselines by more than 100\%.\footnote{DPNE's clustering performance is much worse than the other methods under the same privacy protection level, and thus not plotted in the clustering result figures.}

Fig.~\ref{fig:other_data} illustrates the results on other datasets with the node classification task and LINE embedding. PPNE consistently outperforms baselines significantly. Note that some baselines (e.g., DPNE) are not shown in the large networks (e.g., Flickr), as their running efficiency is too low to obtain results in these large networks. This also validates the superior scalability of PPNE over state-of-the-art privacy-preserving embedding methods like DPNE~\cite{xu2018dpne}.

\begin{figure}[t]
	\centering
	\begin{subfigure}[b]{0.23\textwidth}
		\includegraphics[width=\textwidth]{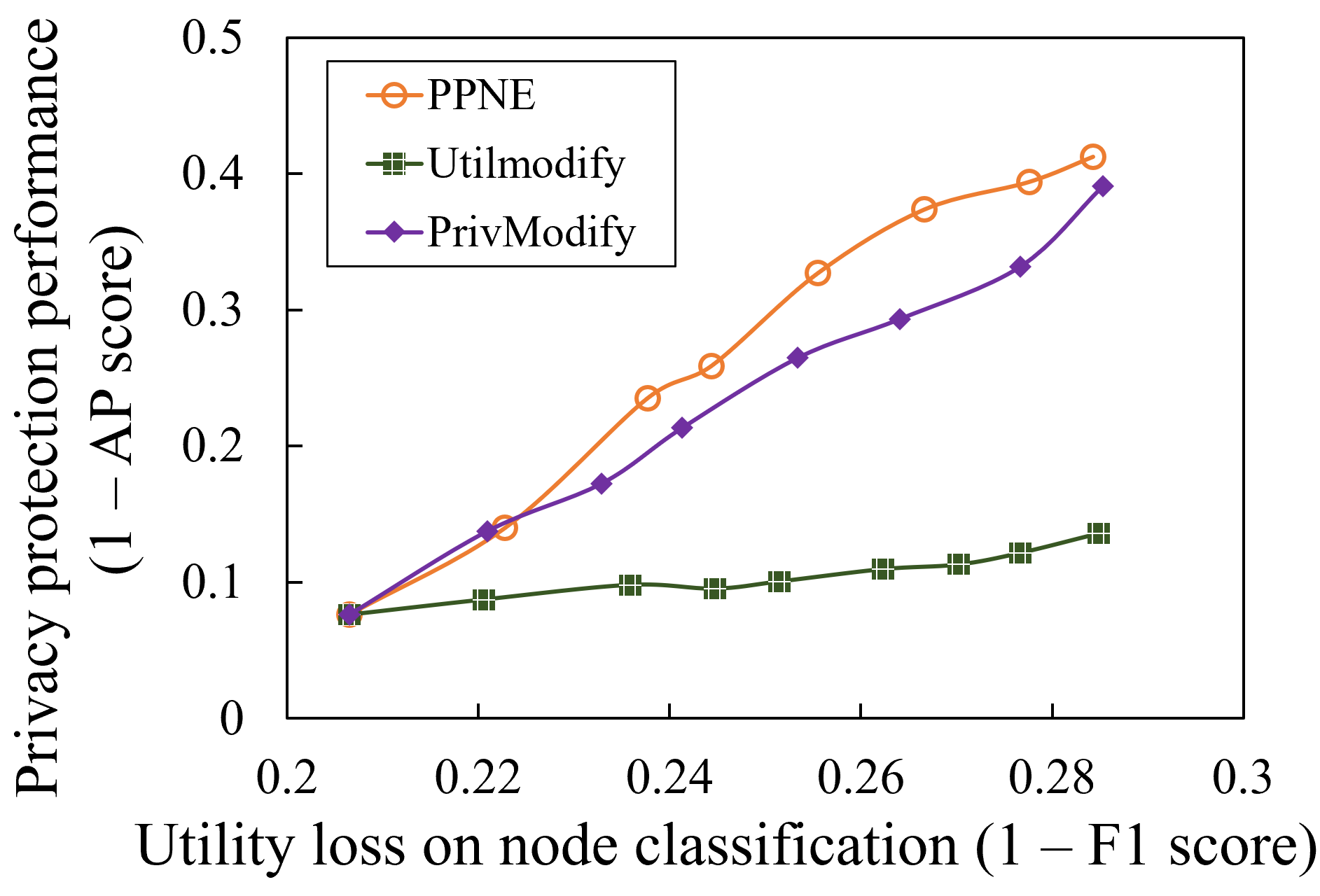}
		\caption{Node Classification}
		\label{fig:variantSINC}
	\end{subfigure}
	\begin{subfigure}[b]{0.23\textwidth}
		\includegraphics[width=\textwidth]{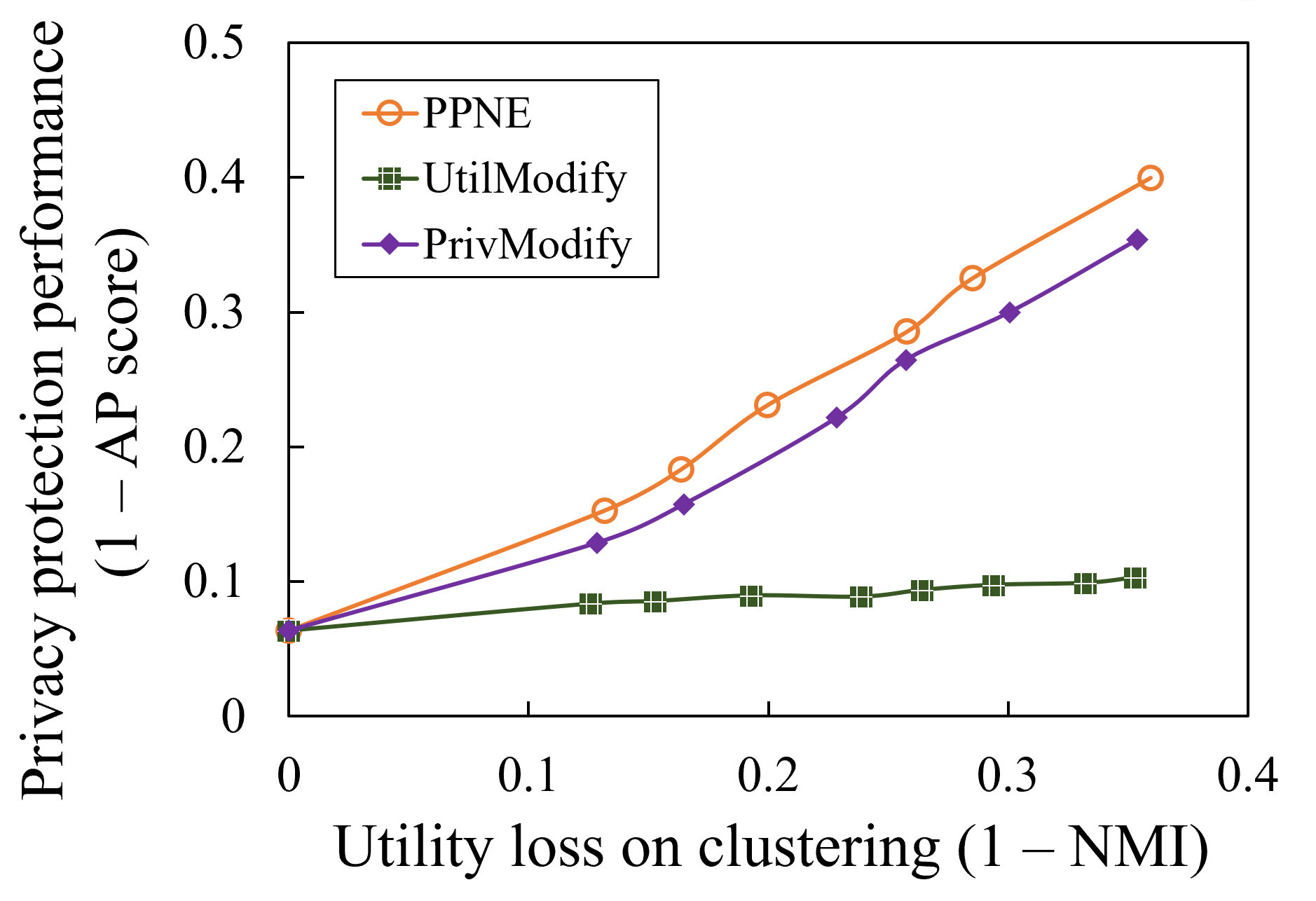}
		\caption{Node Clustering}
		\label{fig:variantSICL}
	\end{subfigure}
	\vspace{-1em}
	\caption{Comparison with PPNE variants (DeepWalk, Cora)}\label{fig:variants}
	\vspace{-1em}
\end{figure}


\subsubsection{Comparison with PPNE variants}

In addition to the baselines, we implement two variants of PPNE:

$\bullet$ \textit{UtilModify}: We rank the candidate perturbations by utility loss, and conduct the perturbation with the smallest utility loss.
	
$\bullet$ \textit{PrivModify}: This method only consider privacy gain for candidate perturbations selection.

Fig. \ref{fig:variants} shows that PPNE achieves better privacy-utility tradeoff than two variants. UtilModify performs really bad, indicating that only considering utility cannot ensure the improvement of privacy protection. 
In a word, by simultaneously considering privacy and utility, PPNE can trade more privacy with less utility loss.

\subsubsection{Parameter sensitivity}
We examine the impacts of key three parameters of PPNE. In particular, we vary the parameter of \textit{relative utility importance over privacy} $k$ in Eq.~\ref{eq:pu_tradeoff_new} on Cora; besides, we run experiments on PubMed with different \textit{perturbation sampling size} and \textit{perturbation batch size} $f$, as PubMed is a larger dataset. We only display the results based on DeepWalk.

\begin{figure}[t]
	\centering
	\begin{subfigure}[b]{0.23\textwidth}
		\includegraphics[width=\textwidth]{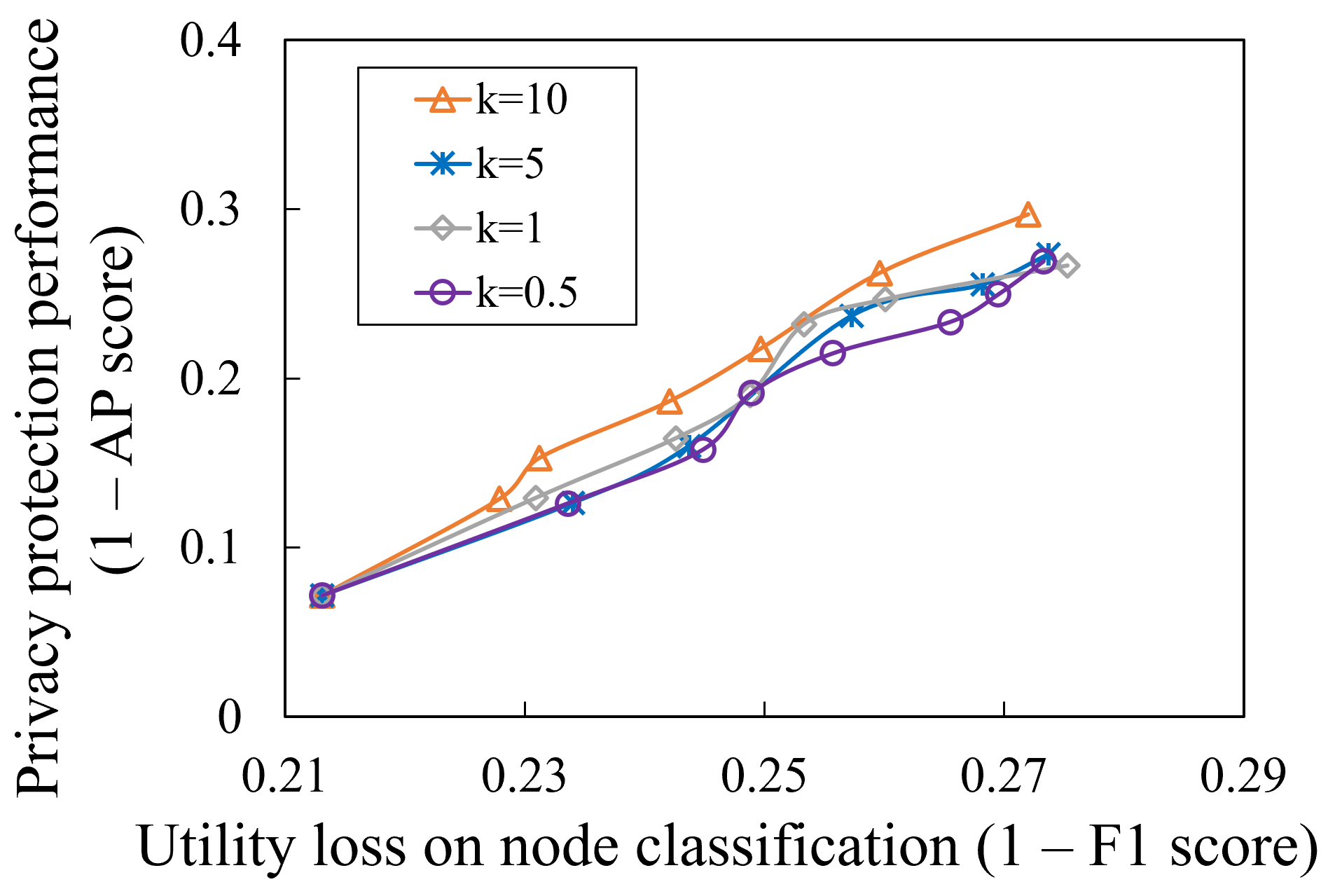}
		\caption{Node Classification}
		\label{fig:paramkSINC}
	\end{subfigure}
	~ 
	\begin{subfigure}[b]{0.23\textwidth}
		\includegraphics[width=\textwidth]{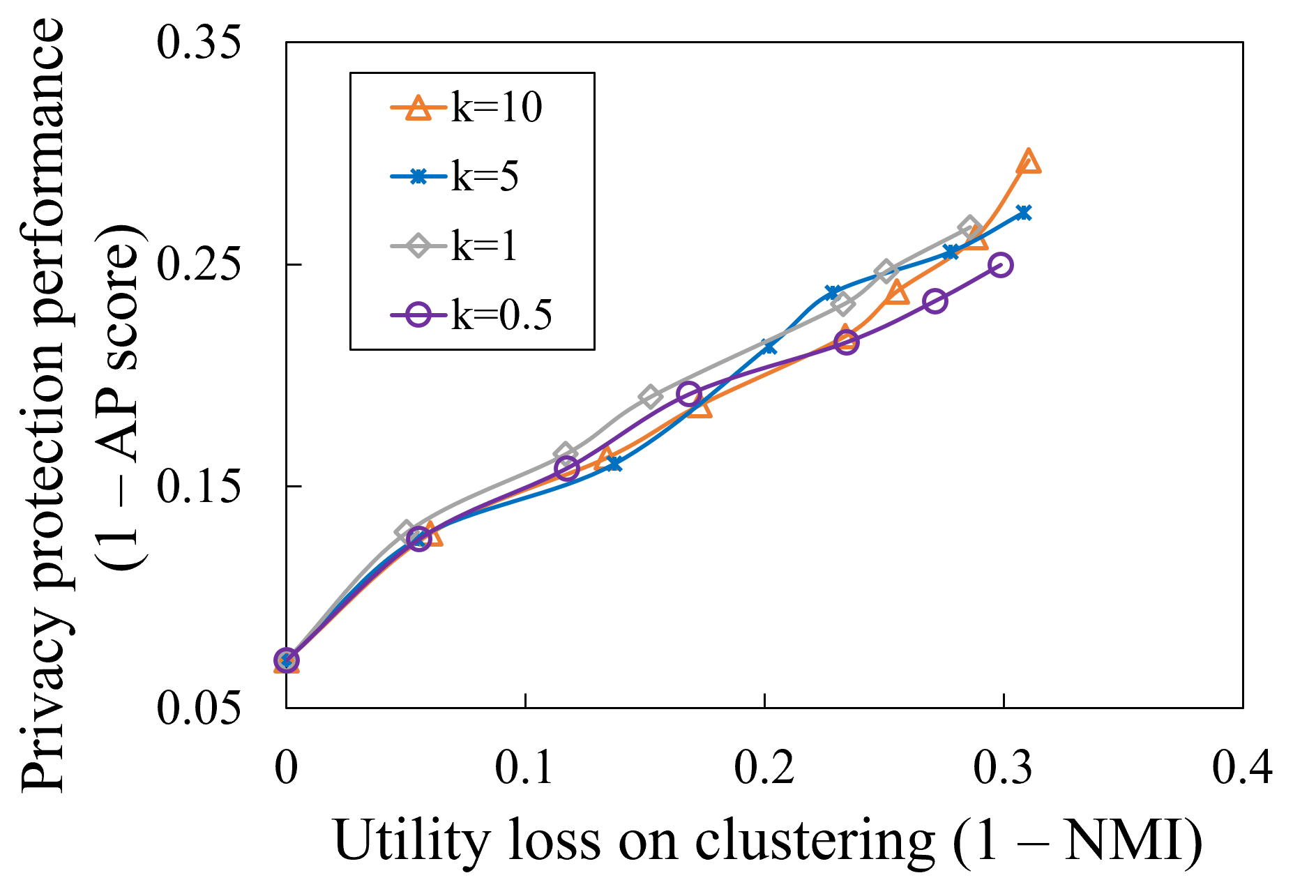}
		\caption{Node Clustering}
		\label{fig:paramkSICL}
	\end{subfigure}
	\vspace{-1em}
	\caption{Impacts of $k$ (DeepWalk)}\label{fig:KParam}
\end{figure}

\textbf{Relative utility importance over privacy} ($k$).
We set different values for $k$ to investigate how it will impact PPNE. From the results in Fig. \ref{fig:paramkSINC}, we observe that PPNE with $k=10$ obtains slightly better privacy-utility tradeoff performance compared to the other settings on node classification task. Meanwhile, the results in Fig.~\ref{fig:paramkSICL} show that $k$ does not show noticeable effects on the clustering task. Integrating the results with those of two PPNE variants, we conclude that the performance of PPNE may be improved by adjusting $k$ meticulously.

\begin{figure*}[t]
	\centering
	\begin{minipage}{.49\textwidth}
	\begin{subfigure}[b]{0.49\textwidth}
		\includegraphics[width=\textwidth]{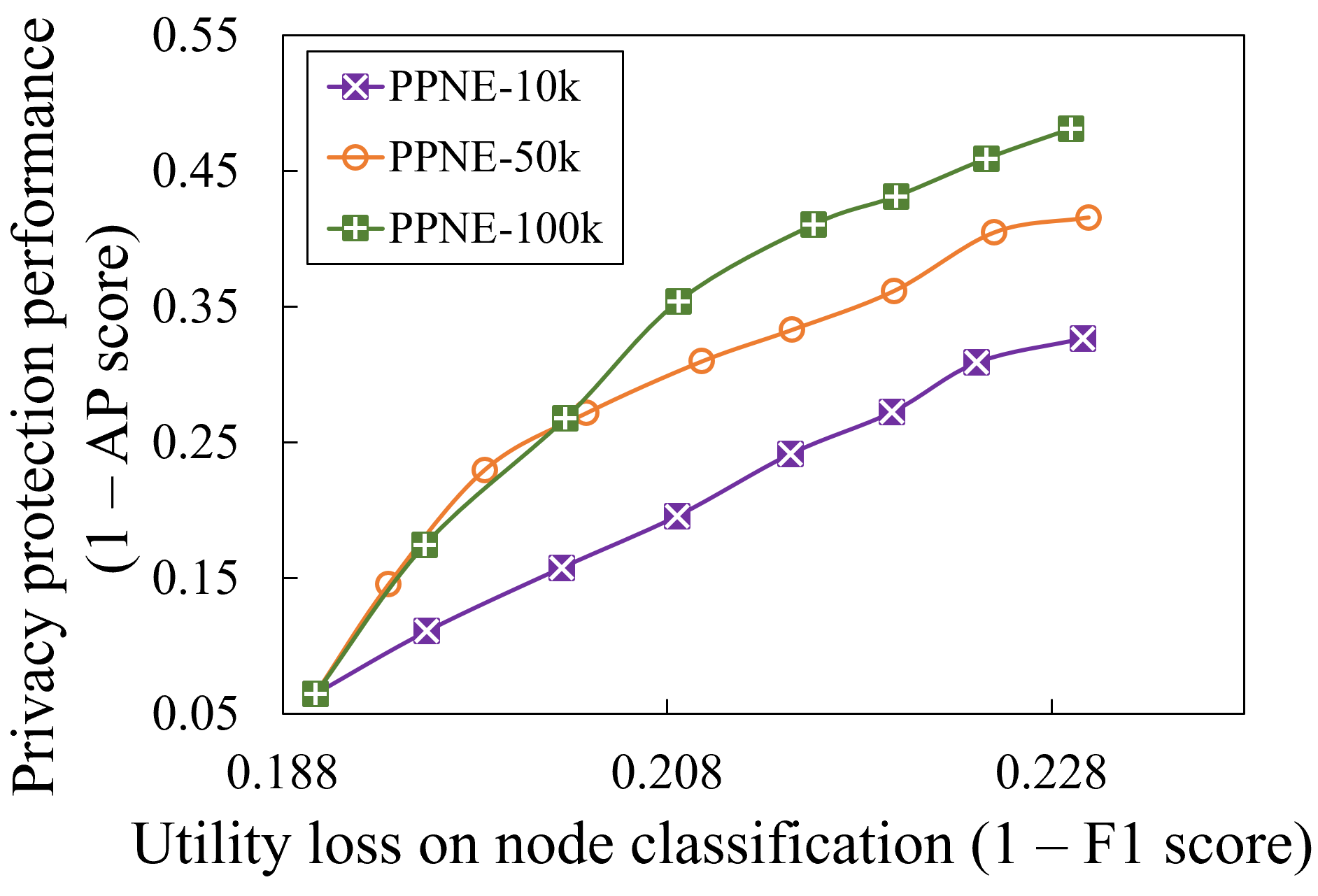}
		\caption{Node Classification}
		\label{fig:batchSINC}
	\end{subfigure}
	~ 
	\begin{subfigure}[b]{0.49\textwidth}
		\includegraphics[width=\textwidth]{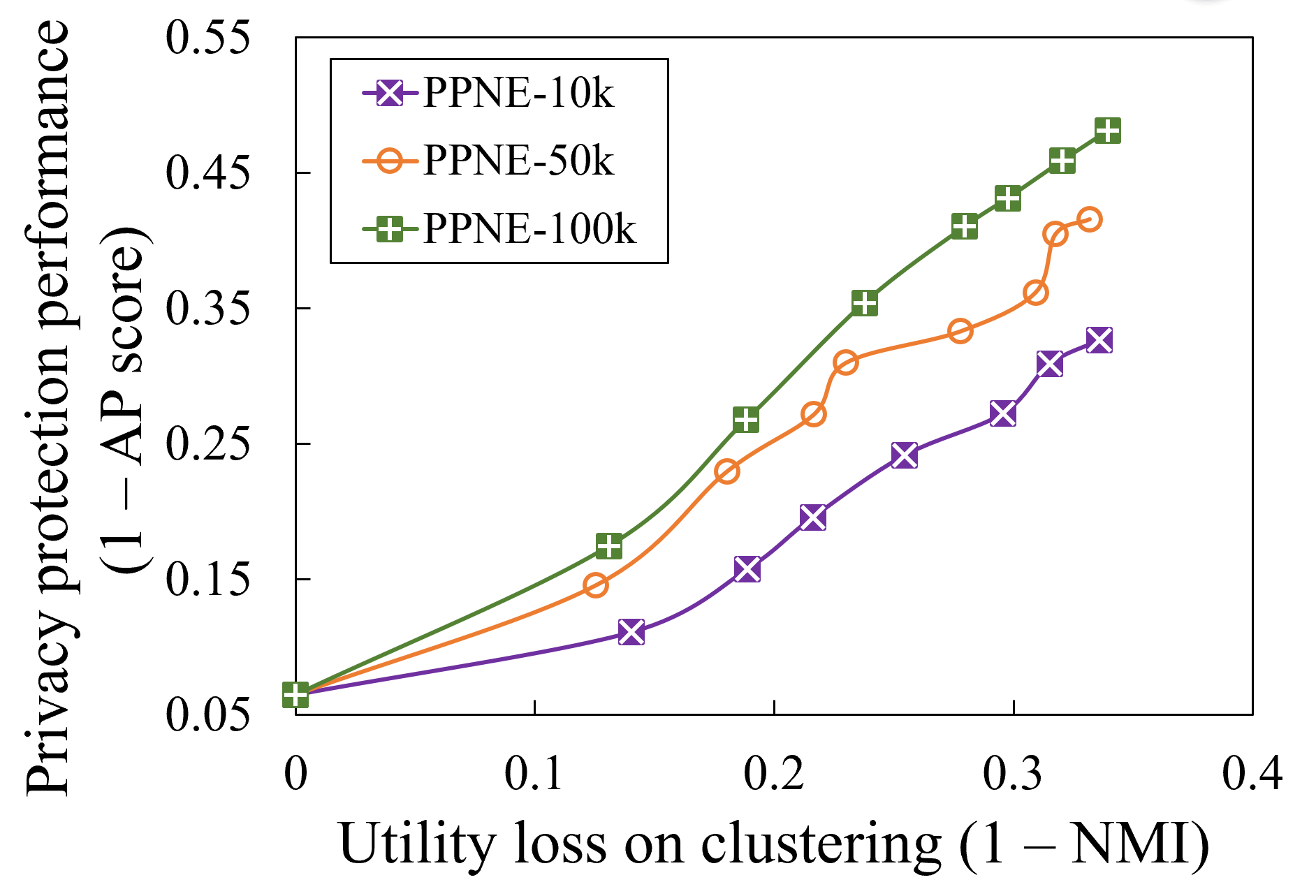}
		\caption{Node Clustering}
		\label{fig:batchSICL}
	\end{subfigure}
	\vspace{-1em}
	\caption{Impacts of $s$ (DeepWalk)}\label{fig:BatchParam}
	\end{minipage}
	\begin{minipage}{.49\textwidth}
	\begin{subfigure}[b]{0.49\textwidth}
		\includegraphics[width=\textwidth]{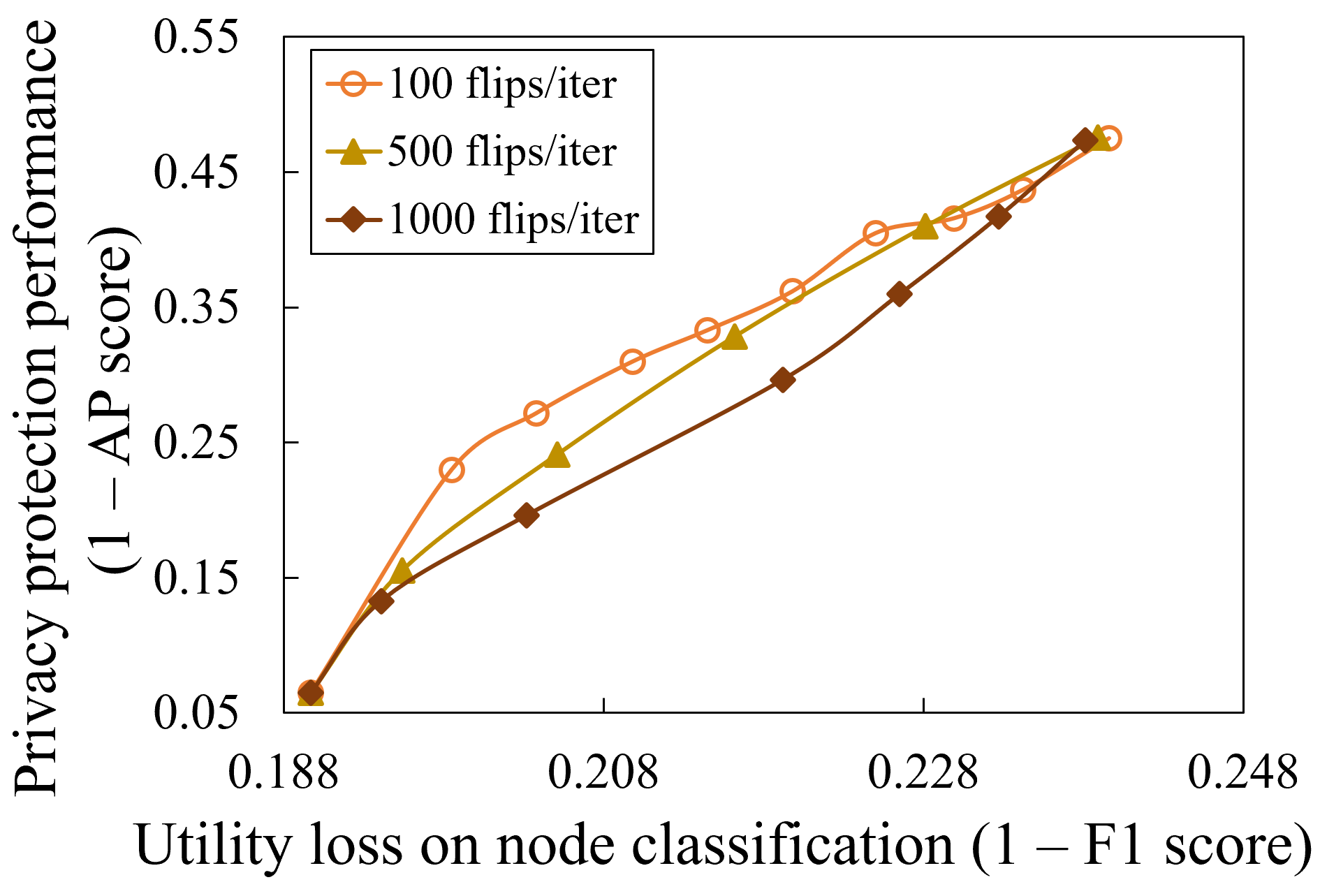}
		\caption{Node Classification}
		\label{fig:flipnumSINC}
	\end{subfigure}
	~
	\begin{subfigure}[b]{0.49\textwidth}
		\includegraphics[width=\textwidth]{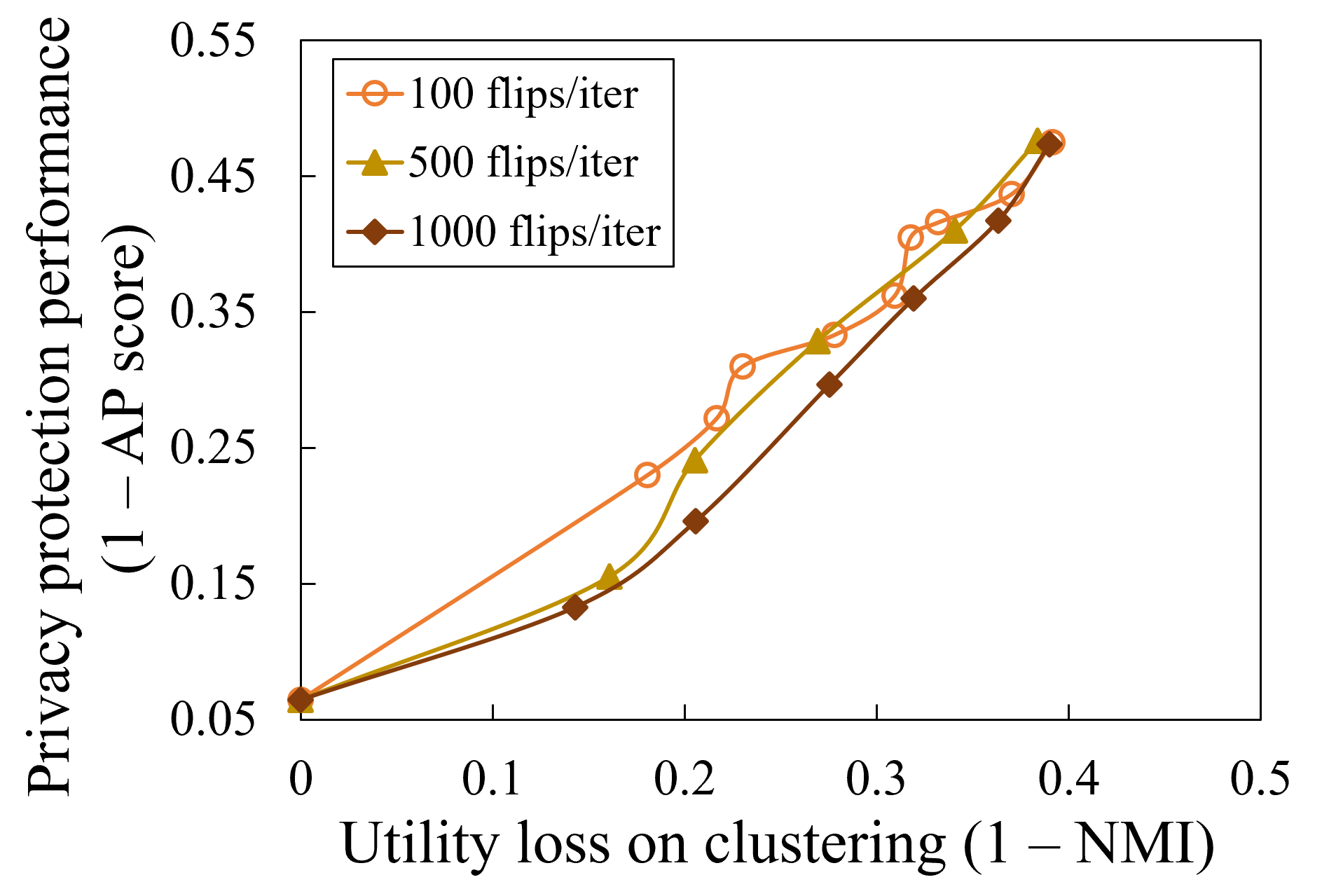}
		\caption{Node Clustering}
		\label{fig:flipnumSICL}
	\end{subfigure}
	\vspace{-1em}
	\caption{Impacts of $f$ (DeepWalk)}\label{fig:FlipNumParam}
	\end{minipage}
	\vspace{-1em}
\end{figure*}

\begin{figure*}[t]
	\centering
	\begin{minipage}{.25\textwidth}
    	\includegraphics[width=1\textwidth]{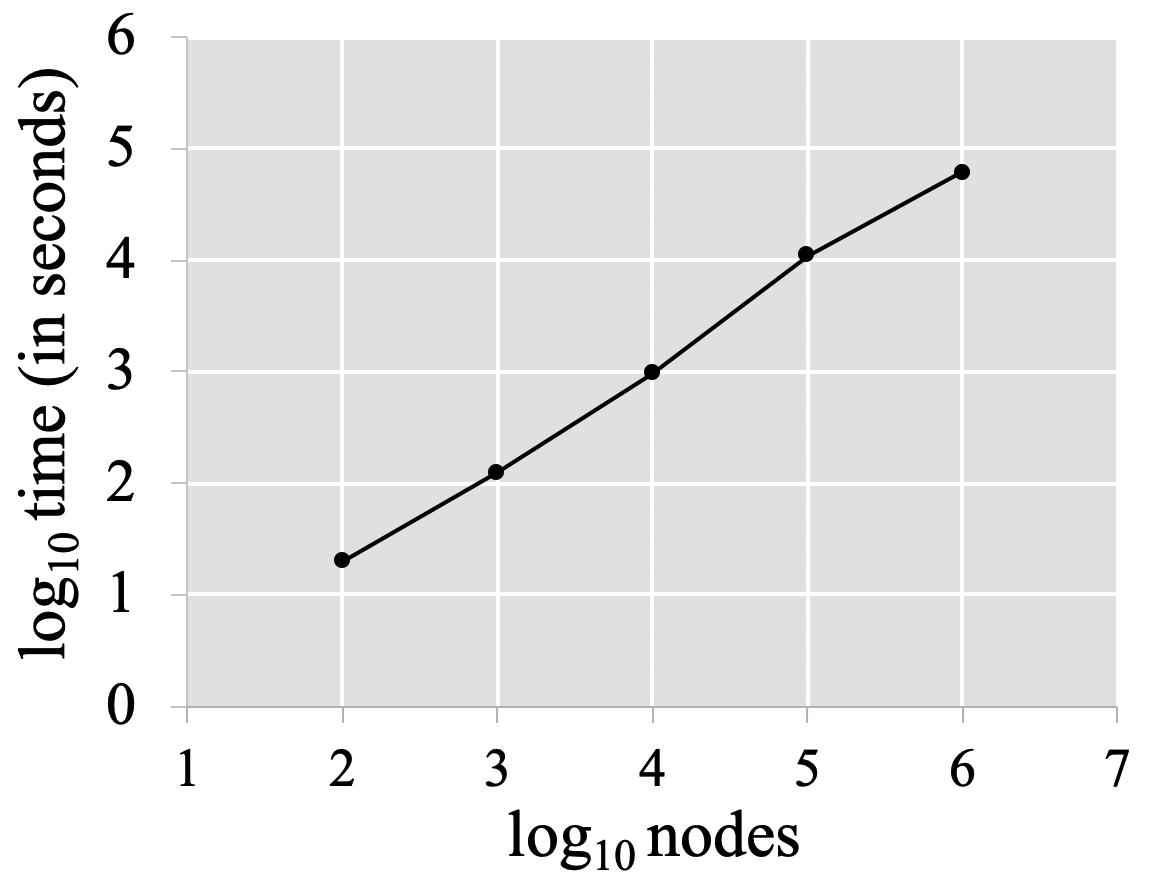}
	    \vspace{-2em}
    	\caption{Scalability of PPNE}
    	\label{fig:scalability_node}
	\end{minipage}
    \ \ \ \ \ \ \ \ 
	\begin{minipage}{.6\textwidth}
    	\begin{subfigure}[b]{0.45\textwidth}
    		\includegraphics[width=1\textwidth]{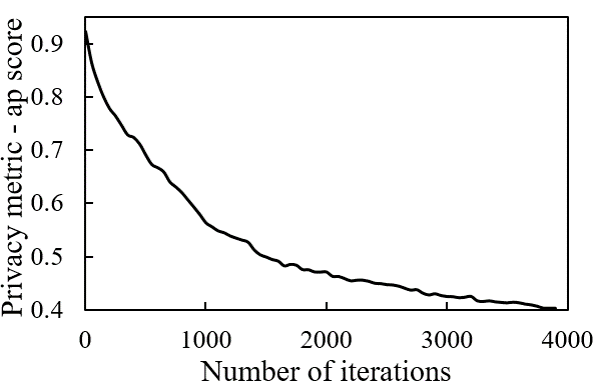}
    		\caption{Privacy metric}
    		\label{fig:convergence_priv}
    	\end{subfigure}
    	~\ \ \ \ 
    	\begin{subfigure}[b]{0.49\textwidth}
    		\includegraphics[width=1\textwidth]{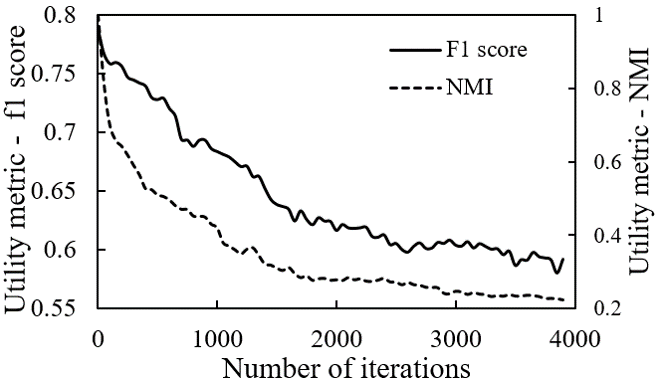}
    		\caption{Utility metrics}
    		\label{fig:convergence_util_nc}
    	\end{subfigure}
	\vspace{-1em}
    \caption{Convergence of PPNE}
	\label{fig:convergence}
	\end{minipage}
	\vspace{-1em}
\end{figure*}

\textbf{Perturbation sampling size} ($s$).
We vary the perturbation sampling size to investigate how the number of candidate perturbations involved in optimization at each iteration would impact PPNE. Specifically, we evaluate three values of sampling size (10k, 50k, and 100k) on PubMed. As expected, the effectiveness of PPNE increases by enlarging the sampling size, since the larger sampling size means more candidate perturbations considered during optimization. 

\textbf{Perturbation batch size} ($f$).
The parameter $f$ is varied to investigate how the number of perturbations conducted in each iteration would impact the performance of PPNE. Fig. \ref{fig:FlipNumParam} shows that a smaller $f$ leads to a better privacy protection performance. The reason is that a smaller $f$ indicates more accurate utility loss and privacy gain computation for candidate perturbations. However, it takes more iterations to reach a target privacy level for a smaller $f$.

\subsubsection{Scalability}
Our work has put much effort in enhancing the scalability of PPNE, and have shown that PPNE can work well for a large network including $1.7$ millions of nodes (Flickr). In this section, we further investigate the scalability of PPNE, and show the time of one iteration of PPNE with the simulated networks of different size in Fig.~\ref{fig:scalability_node}. In particular, we simulate a number of Erdos-Renyi networks with different node size from 100 to 1,000,000 nodes and constant average degree of 10. The default perturbation sampling size is set to 10,000. In general, we observe that the running time of PPNE increases linearly with the number of network nodes. Specifically, for the network with up to 1,000,000 nodes, we can finish one iteration in 3 hours. 

\subsubsection{Convergence}
In this part, we verify the convergence of PPNE empirically. As the number of iterations increases, more perturbations are carried out on the network, which is supposed to decrease the privacy leakage and utility of PPNE. We perform PPNE on Cora with default parameter values and plot the changes of the privacy and utility metrics by increasing the number of iterations in  Fig.~\ref{fig:convergence}. Just as expected, we observe that both of the privacy leakage and utility of PPNE decrease quickly in an iteration when the number of iterations is small; and the decrease get slower and slower as the number of iterations increases; the privacy leakage and utility of PPNE finally become converged at approximately the 3000th iteration (here we do one perturbation per iteration).

\section{Related Work}

Publishing data directly without anonymization may lead to severe privacy leakage~\cite{gong2016you, jia2017attriinfer}. Traditional privacy-preserving network data publishing mainly focuses on tabular data and network structure data. When publishing tabular data, generalization techniques~\cite{xu2014a}, obfuscation techniques \cite{salamatian2015managing} and adding noise~\cite{jia2018attriguard} are commonly used methods. As for sharing network graph data, differential privacy is the most popular approach which generates synthetic DP-networks with either edge differential privacy~\cite{hay2009accurate} or node differential privacy~\cite{kasiviswanathan2013analyzing}. Other works protect user privacy by adding or deleting the links on network~\cite{chen2018disclose, yu2019target}.

Recently, network embedding has become one of the most important and ubiquitous network data representation methods~\cite{hamilton2017representation}. However, people often overlook the privacy threats when publishing network embeddings, which may result in leakage of users' private information including attributes, membership and links~\cite{duddu2020quantifying}. This work focuses on privacy-preserving network embedding publishing techniques against private link inference attacks.

Although the privacy leakage problem when publishing network embedding does exist, there are few researches working on it. Recently, a differential privacy framework named DPNE~\cite{xu2018dpne} has been proposed to generate network embedding with theoretical privacy guarantees, which ensures the embeddings generated by two arbitrary adjacency networks (differ on a single link) indisguishable in probability via adding random noises. As DPNE is not designed to defend against private link inference attacks, the attackers with differentially private network embedding still can accurately private links of the network~\cite{jia2018attriguard, shokri2015privacy}. GNNs with adversarial training are also used to generate privacy-preserving network representations~\cite{li2020adversarial, wang2021privacy}. Specifically, these methods aim to produce the network embedding that is only useful for a specific task but effective for another task. In comparison, our work proposes a practical privacy-preserving mechanism for large-scale network embedding publishing, which can efficiently prevent private link inference attacks and benefit various downstream tasks.
 
\section{Conclusion and Discussion}

This work addresses a new problem of privacy-preserving network embedding against private link inference attacks. We first perturb the original network by adding/removing links, and then learn the embedding on the perturbed network. In order to find the optimal perturbed network, we proposed PPNE, which incorporates novel utility loss and privacy gain computation methods to quantify perturbations, and then perturbs the original network iteratively to reach satisfactory privacy-utility tradeoff.

We discuss our limitations and future work. (i) PPNE is suitable for undirected networks and how to extend it to directed networks remains a question. (ii) The scalability improvement of PPNE is currently designed for skip-gram network embedding methods (DeepWalk and LINE); how to improve its scalability on other network embedding methods would be another future direction. (iii) While PPNE aims to protect private links on the network, other private information (e.g. private attributes) may also be taken into consideration in the future work. 

\section*{Acknowledgment}

Authors contribute equally and are listed in alphabetical order.


\bibliographystyle{ACM-Reference-Format}
\bibliography{related-works}

\section*{Appendix}
In this supplemental document, we provide the content related to the attackers with auxiliary knowledge, privacy-utility tradeoff results on Citeseer and PubMed dataset, and the experimental details.

\subsection*{Attackers with Auxiliary Knowledge}
For the sake of briefness, we simply call the attackers with no auxiliary knowledge Type-I attackers, and the attackers with auxiliary knowledge Type-II attackers. Type-II attackers hold auxiliary knowledge in addition to $X'$. In this work, we mainly consider that Type-II attackers foreknow the true relation status (1 or 0) of partial private node pairs $\mathcal P_\textit{know} \subset \mathcal P$. We also denote the rest relation-unknown sensitive node pairs for attacking as $\mathcal P_\textit{unknow}$ ($\mathcal P = \mathcal P_\textit{know} \cup \mathcal P_\textit{unknow}$). Type-II attackers then can adopt any supervised learning technique to construct a link prediction attack model $\mathcal A$ while $X'$ is the model input features.
\begin{equation}
	\textit{Attack}(v_i,v_j) = \mathcal A (X'_i, X'_j | \mathcal P_\textit{know}), \quad \forall (v_i,v_j) \in \mathcal P_\textit{unknow}
\end{equation}

Directly defending Type-II attackers is infeasible, as there are too many indeterministic factors such as the feature engineering, model training techniques, and the known sensitive node pairs $\mathcal P_\textit{know}$. 
In fact, cosine similarity based Type-I attack (Eq.~\ref{eq:attack_I_def}) can actually be seen as a special case of a strong embedding-based Type-II link prediction attack model~\cite{grover2016node2vec}. We believe that the defense on Type-I attacks can also help defend Type-II attacks to a certain extent.
\begin{theorem}
	\label{theorem:attack_1_to_attack_2}
	\textit{The cosine similarity based Type-I attack is a special case of the Hadamard-product embedding-based logistic regression Type-II attack model (which performs the best among all the embedding-based link prediction methods in literature \cite{grover2016node2vec}).}
\end{theorem}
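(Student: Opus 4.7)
}

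The plan is to exhibit an explicit parameter choice for the Hadamard-product logistic regression (HP-LR) attack under which its decision rule coincides, up to a strictly monotone transformation, with the cosine-similarity Type-I attack. The attack outcome --- whether we regard it as a ranking of candidate pairs in $\mathcal P$ or as a thresholded hard decision --- depends only on the order of the scores, so a monotone transformation suffices to establish the ``special case'' claim.

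First, I would write out both attacks side by side. Following the convention used in Section~3.1.1 (and explicitly justified by the footnote there), the embeddings $\textbf{x}'_i$ are $\ell_2$-normalized so that $\textbf{x}'_i {\textbf{x}'_j}^T = \cos(\textbf{x}'_i, \textbf{x}'_j)$, giving the Type-I score
\begin{equation*}
\textit{Attack}_{\text{I}}(v_i,v_j) = \sum_{k=1}^{d} x'_{i,k}\, x'_{j,k}.
\end{equation*}
The HP-LR attack, by contrast, takes as input the Hadamard product $\textbf{x}'_i \odot \textbf{x}'_j \in \mathbb{R}^d$ with $(\textbf{x}'_i \odot \textbf{x}'_j)_k = x'_{i,k}\, x'_{j,k}$, and outputs
\begin{equation*}
\textit{Attack}_{\text{II}}(v_i,v_j) = \sigma\bigl(\textbf{w}^\top (\textbf{x}'_i \odot \textbf{x}'_j) + b\bigr),
\end{equation*}
for a learned weight vector $\textbf{w} \in \mathbb{R}^d$ and bias $b \in \mathbb{R}$.

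Next, I would instantiate the HP-LR hypothesis class at the specific parameters $\textbf{w} = \mathbf{1}$ (the all-ones vector) and $b = 0$. Under this choice,
\begin{equation*}
\textit{Attack}_{\text{II}}(v_i,v_j) \;=\; \sigma\!\left(\sum_{k=1}^{d} x'_{i,k}\, x'_{j,k}\right) \;=\; \sigma\bigl(\textit{Attack}_{\text{I}}(v_i,v_j)\bigr).
\end{equation*}
Because $\sigma$ is strictly increasing on $\mathbb{R}$, the induced ordering on $\mathcal{P}_{\text{unknow}}$ is identical to that produced by the cosine-similarity attack, and for any decision threshold $\tau \in (0,1)$ used by the Type-II attacker there is a matching threshold $\sigma^{-1}(\tau)$ that yields the same positive/negative partition under Type-I. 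Hence every pair-level inference made by the cosine attack is reproduced by this specific HP-LR model.

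Finally, I would close by noting that since $(\textbf{w},b) = (\mathbf{1},0)$ lies in the HP-LR hypothesis class, the cosine-similarity Type-I attack belongs to the family of HP-LR Type-II attacks, i.e., it is a special case as claimed. The only subtle step, and the main obstacle I anticipate, is clarifying the role of embedding normalization: the identity between cosine similarity and inner product relies on unit-norm embeddings, so I would either invoke this normalization explicitly (as the paper already does) or, more generally, absorb per-node norms into a pair-wise positive rescaling and observe that such a rescaling is again monotone and therefore ranking-preserving, which keeps the argument intact.
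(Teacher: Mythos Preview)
Your proposal is correct and follows essentially the same approach as the paper: write the cosine similarity of normalized embeddings as the coordinate-wise product sum, write the HP-LR score as a linear function of the Hadamard-product features, and set all weights to $1$ and the bias to $0$. The paper's proof omits the sigmoid and compares the linear logits directly, whereas you additionally carry the sigmoid through and invoke its strict monotonicity; this is a harmless refinement, not a different argument.
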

\begin{proof}
	Denote $v_i$'s embedding vector as $X'_i =  \langle x'_{i,1}, x'_{i,2}, ..., x'_{i,m}\rangle$ and $X'_i$ is normalized. Then the cosine similarity of $X_i'$ and $X_j'$ can be written as:
	\begin{equation}
		sim(X'_i, X'_j) = x'_{i,1}x'_{j,1} + x'_{i,2}x'_{j,2} + ... + x'_{i,m}x'_{j,m}
		\label{eq:sim_cos}
	\end{equation}
	The prior research \cite{grover2016node2vec} has verified that the logistic regression on the Hadamard product of the embedding vectors of $v_i$ and $v_j$:
	\begin{equation}
		\mathcal A(X'_i, X'_j) = b_0 + b_1 x'_{i,1}x'_{j,1} + b_2 x'_{i,2}x'_{j,2} + ... + b_m x'_{i,m}x'_{j,m}
		\label{eq:logistic_regression}
	\end{equation}
	Eq.~\ref{eq:sim_cos} is thus a special case of Eq.~\ref{eq:logistic_regression} when node embeddings are normalized, $b_0 =0$, and $\forall_k b_k=1$.
\end{proof}

\textbf{Private link inference attacks with certain auxiliary knowledge.}
In this scenario, the private link inference attack problem can be regarded as a binary classification problem, where attackers concatenate two private nodes' embedding vectors as the input features, and whether a link exists between them or not (1 or 0) as the label. Attackers can train a classifier based on $\mathcal P_\textit{know}$ and use the classifier to predict unknown sensitive links. The performance of such link inference attack methods can be measured by $F1\ score$. We assess the privacy protection performance by $1 - F1\ score$. Higher $1 - F1\ score$ means better privacy protection. We use XGBoost \cite{chen2016xgboost} as the attack classifier model as it performs consistently well in our datasets.

\subsection*{Results on Citeseer and PubMed}
In this section, we compare the privacy-utility tradeoff performance of PPNE with baselines on on Citeseer and PubMed dataset. Results on Citeseer based on DeepWalk and LINE are shown in Fig. \ref{fig:CiteTradeOffDW} and Fig. \ref{fig:CiteTradeOffLINE}, while results on PubMed based on DeepWalk and LINE are illustrated in Fig. \ref{fig:PubTradeOffDW} and Fig. \ref{fig:PubTradeOffLINE}, respectively.

From Fig. \ref{fig:CiteTradeOffDW} to Fig. \ref{fig:PubTradeOffLINE}, we can observe the same trend that the privacy protection performances of all methods are becoming better with the decline in utility level, which indicates that we can improve privacy protection by sacrificing part of the data utility purposely. What's more, PPNE shows the best privacy-utility tradeoff performance compared to the baselines under all experimental settings by providing strong privacy protection with less utility loss.

In particular, as the results on Citeseer dataset illustrated in Fig. \ref{fig:CiteDWSINC} and Fig. \ref{fig:CiteDWSICL}, DICE achieves the best performance among the baselines, while PPNE improves the privacy protection by around 81.8\% and 7.1\% compared to it when the utility (1-f1 score) on node classification is around 0.52, respectively. On PubMed dataset, from the results shown in Fig. \ref{fig:PubDWSINC}, we note that when the utility on node classification task decreases around 5.2\% compared to the initial point, the privacy protection metric (1-ap score) brought by PPNE can be improved to around 0.24, while that of baselines only increased by around 0.01. These results on Citeseer and PubMed dataset are consistent with the reults on Cora dataset that PPNE achieves the best privacy-utility tradeoff among the methods both based on DeepWalk and LINE. And the results also prove that PPNE is applicable to multiple downstream utility applications including but not limited to node classification and clustering.

\begin{figure*}[b]
	\centering
	\begin{subfigure}[b]{0.23\textwidth}
		\includegraphics[width=\textwidth]{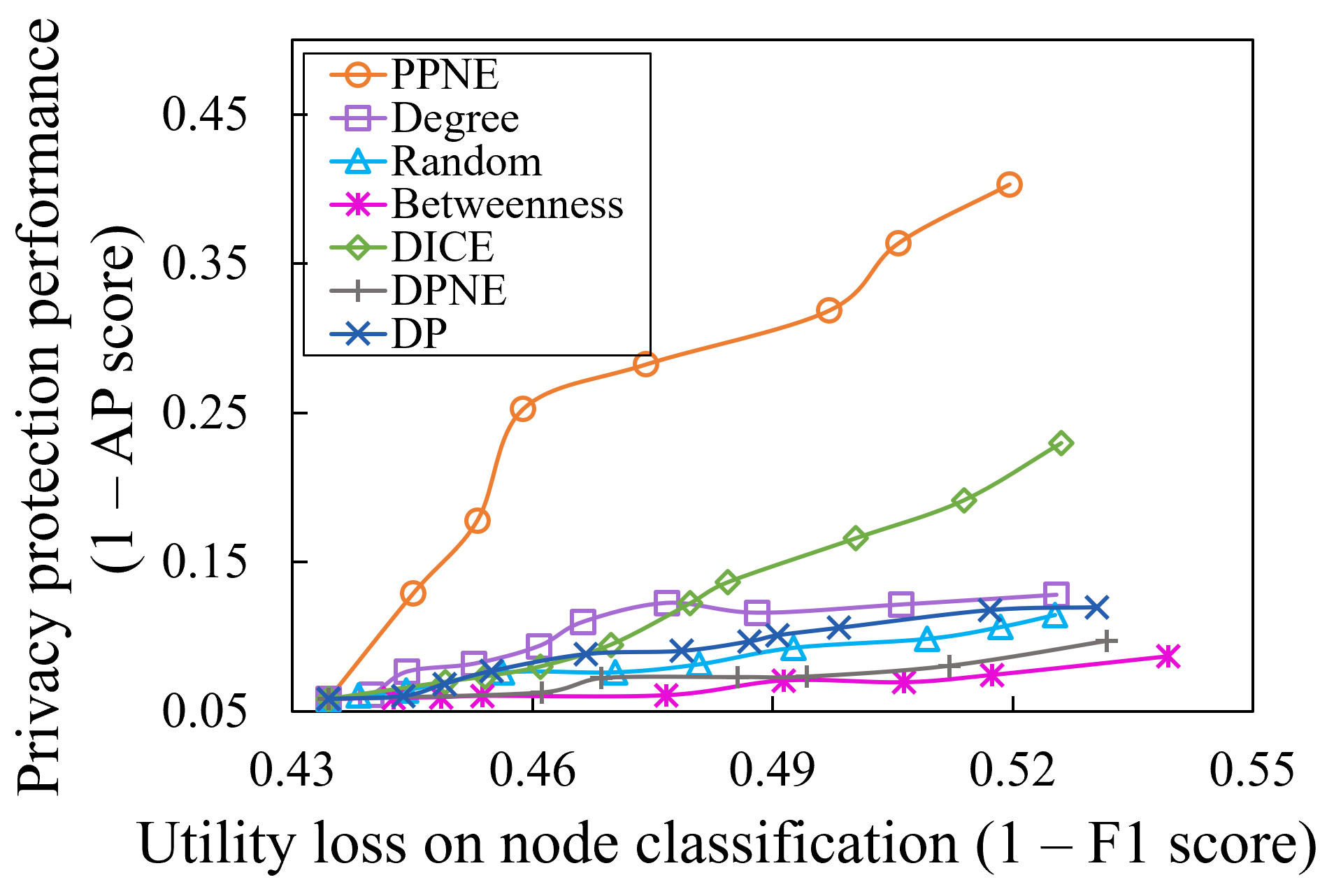}
		\caption{Scenario I on Classification}
		\label{fig:CiteDWSINC}
	\end{subfigure}
	\quad 
	\begin{subfigure}[b]{0.23\textwidth}
		\includegraphics[width=\textwidth]{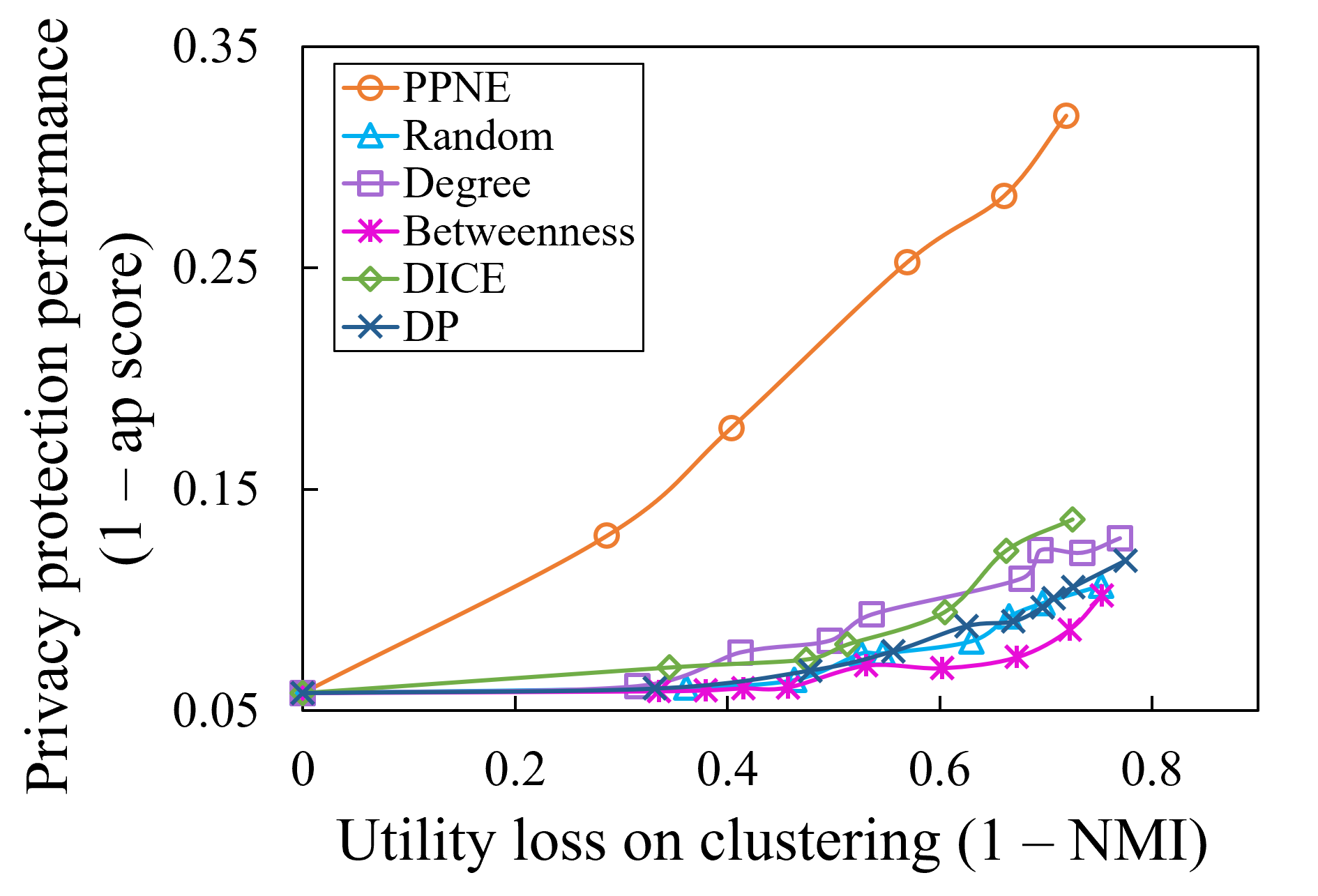}
		\caption{Scenario I on Clustering}
		\label{fig:CiteDWSICL}
	\end{subfigure}
	\quad 
	\begin{subfigure}[b]{0.23\textwidth}
		\includegraphics[width=\textwidth]{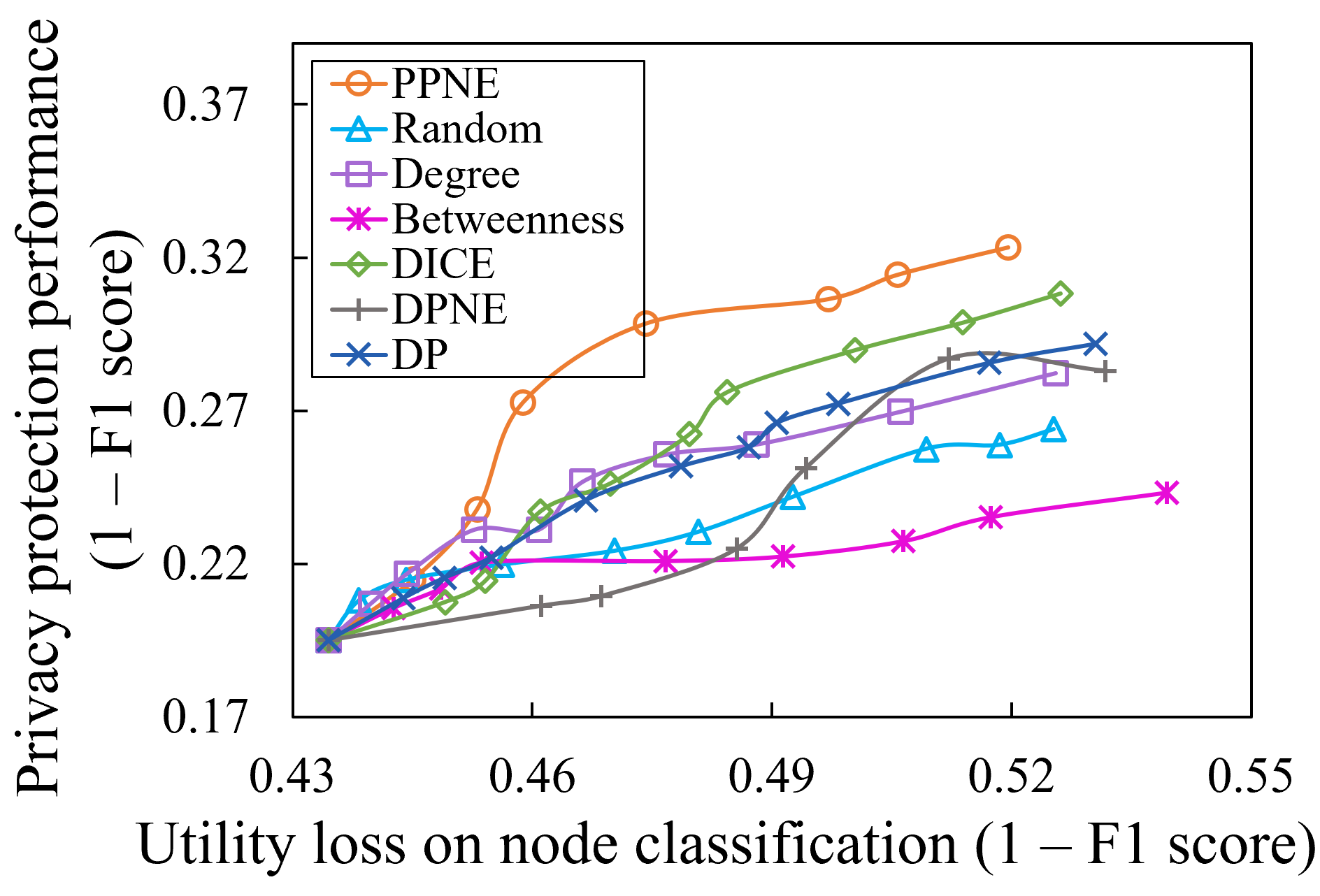}
		\caption{Scenario II on Classification}
		\label{fig:CiteDWSIINC}
	\end{subfigure}
	\quad
	\begin{subfigure}[b]{0.23\textwidth}
		\includegraphics[width=\textwidth]{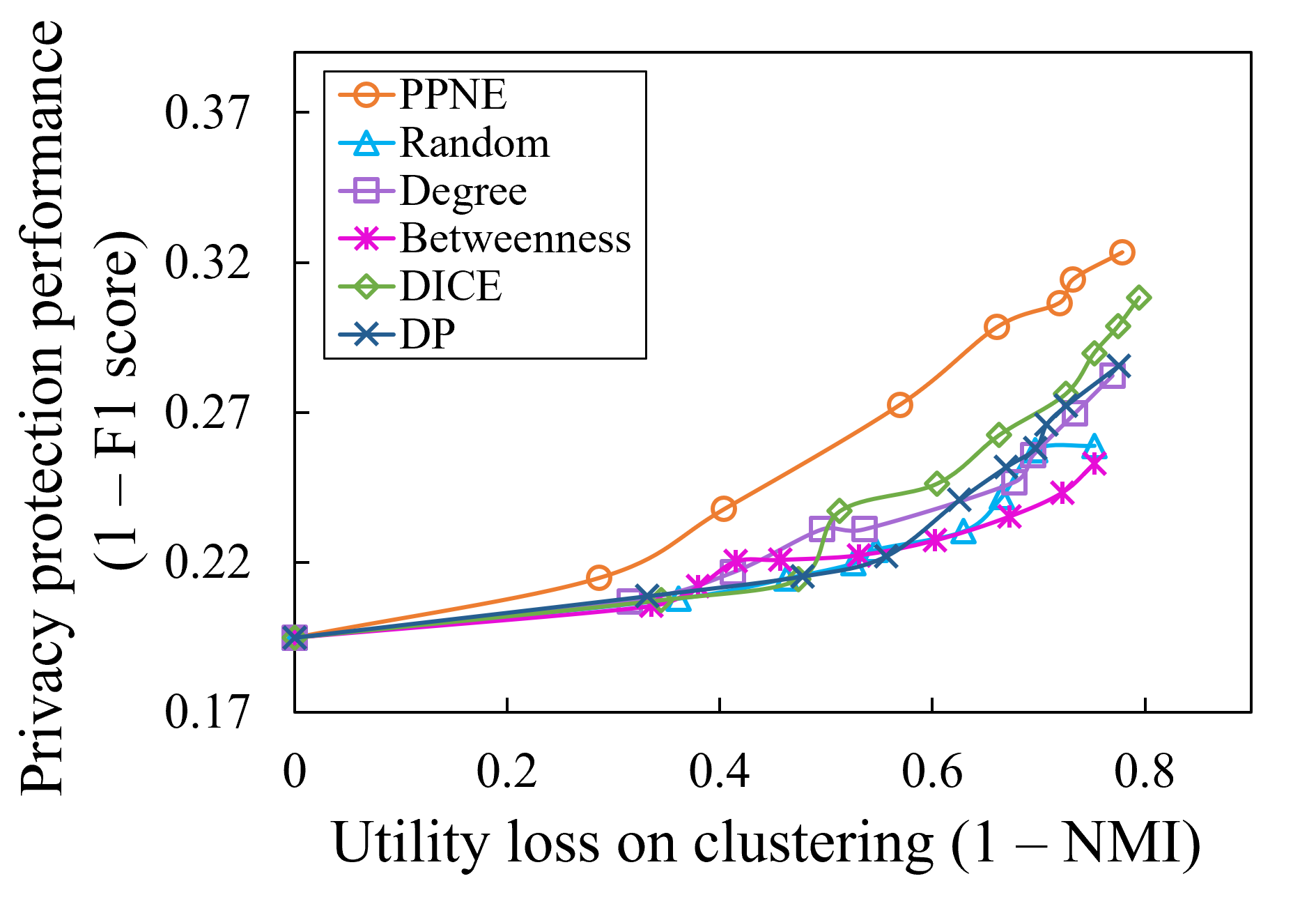}
		\caption{Scenario II on Clustering}
		\label{fig:CiteDWSIICL}
	\end{subfigure}
	\vspace{-1em}
	\caption{Tradeoff on Citeseer: privacy protection on private links and node classification/clustering performance (DeepWalk)}\label{fig:CiteTradeOffDW}
	\vspace{-1em}
\end{figure*}

\begin{figure*}[b]
	\centering
	\begin{subfigure}[b]{0.23\textwidth}
		\includegraphics[width=\textwidth]{figures2/citeseer_line_scenario1_classification.png}
		\caption{Scenario I on Classification}
		\label{fig:CiteLINESINC}
	\end{subfigure}
	\quad 
	\begin{subfigure}[b]{0.23\textwidth}
		\includegraphics[width=\textwidth]{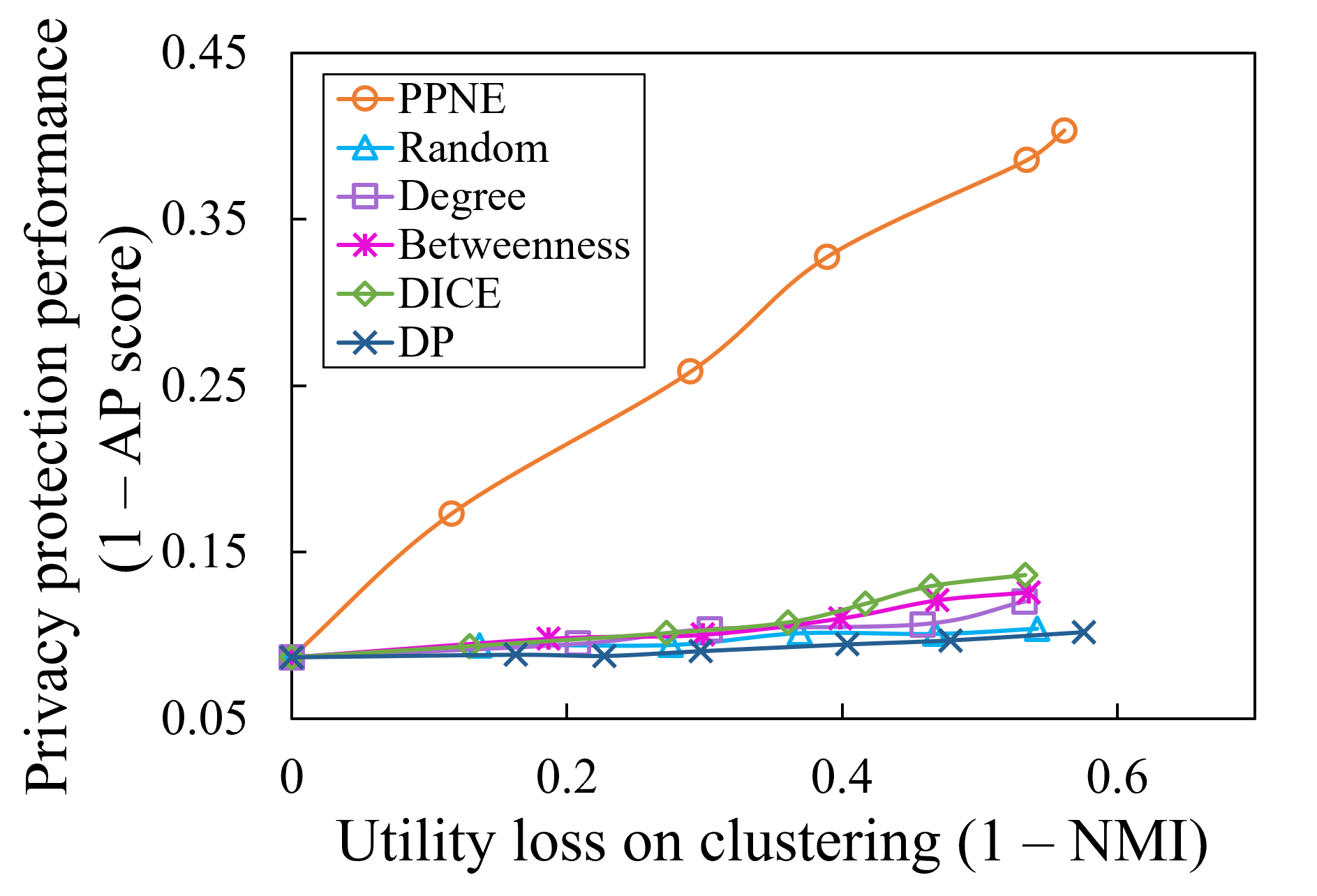}
		\caption{Scenario I on Clustering}
		\label{fig:CiteLINESICL}
	\end{subfigure}
	\quad 
	\begin{subfigure}[b]{0.23\textwidth}
		\includegraphics[width=\textwidth]{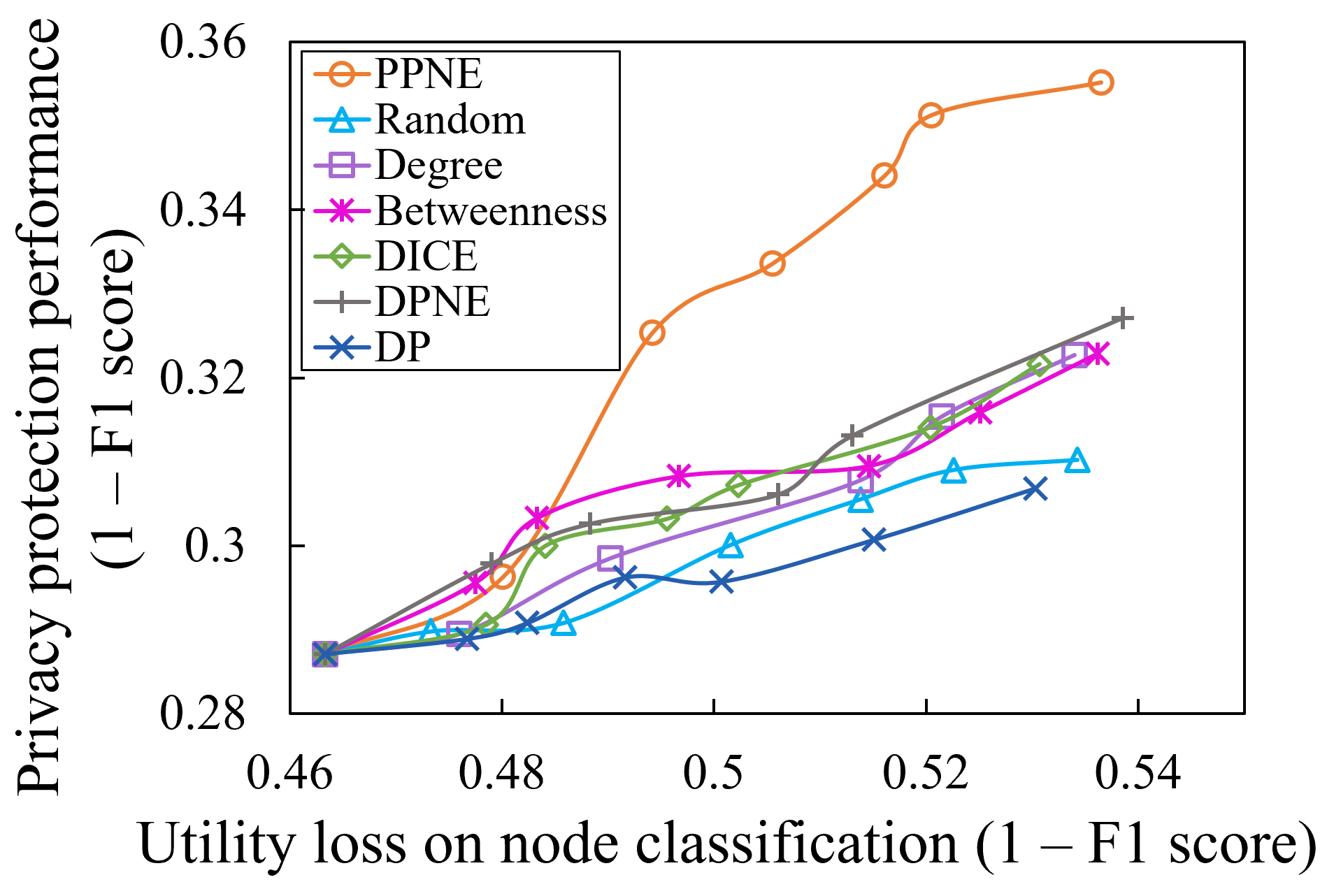}
		\caption{Scenario II on Classification}
		\label{fig:CiteLINESIINC}
	\end{subfigure}
	\quad
	\begin{subfigure}[b]{0.23\textwidth}
		\includegraphics[width=\textwidth]{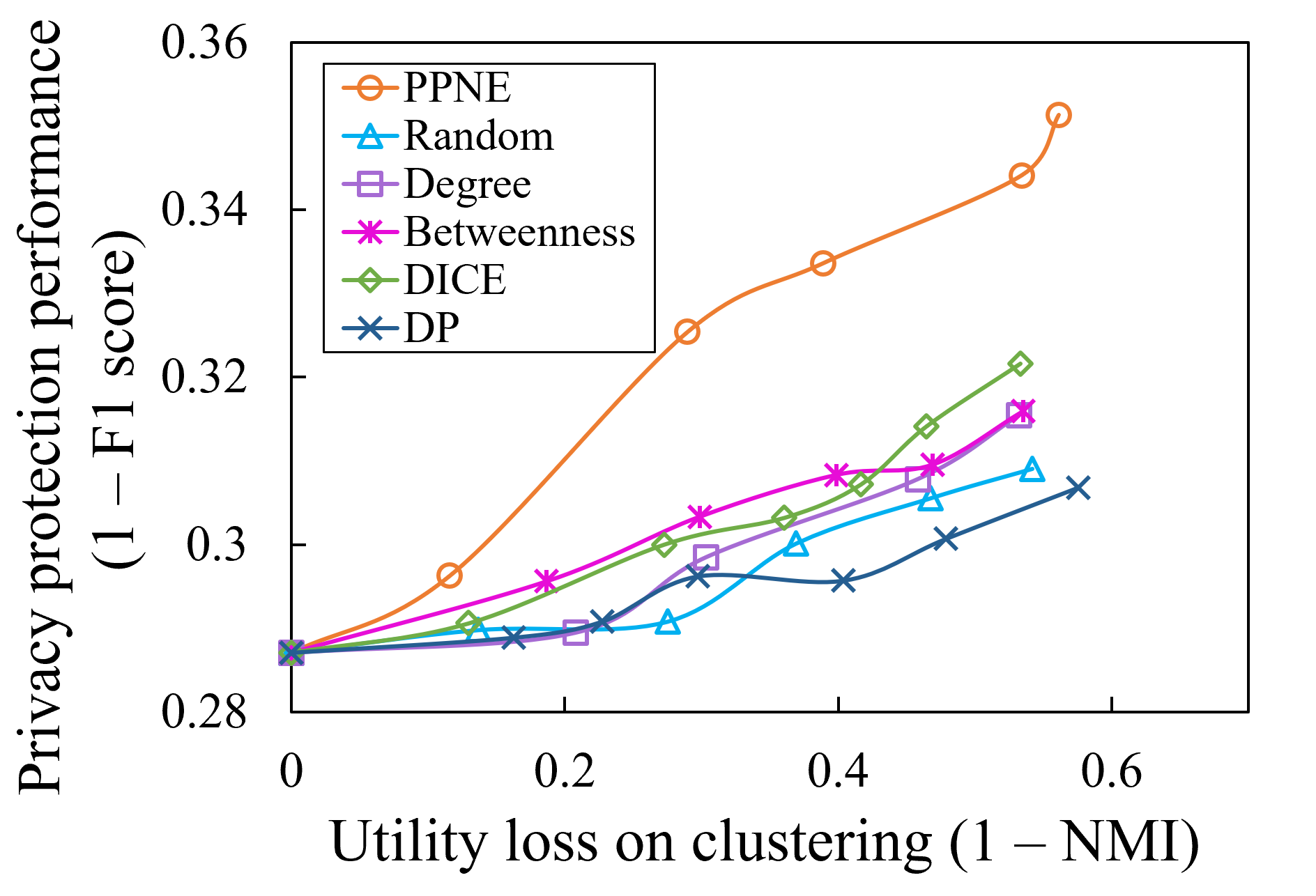}
		\caption{Scenario II on Clustering}
		\label{fig:CiteLINESIICL}
	\end{subfigure}
	\vspace{-1em}
	\caption{Tradeoff on Citeseer: privacy protection on private links and node classification/clustering performance (LINE)}
	\label{fig:CiteTradeOffLINE}
	\vspace{-1em}
\end{figure*}

\begin{figure*}[b]
	\centering
	\begin{subfigure}[b]{0.23\textwidth}
		\includegraphics[width=\textwidth]{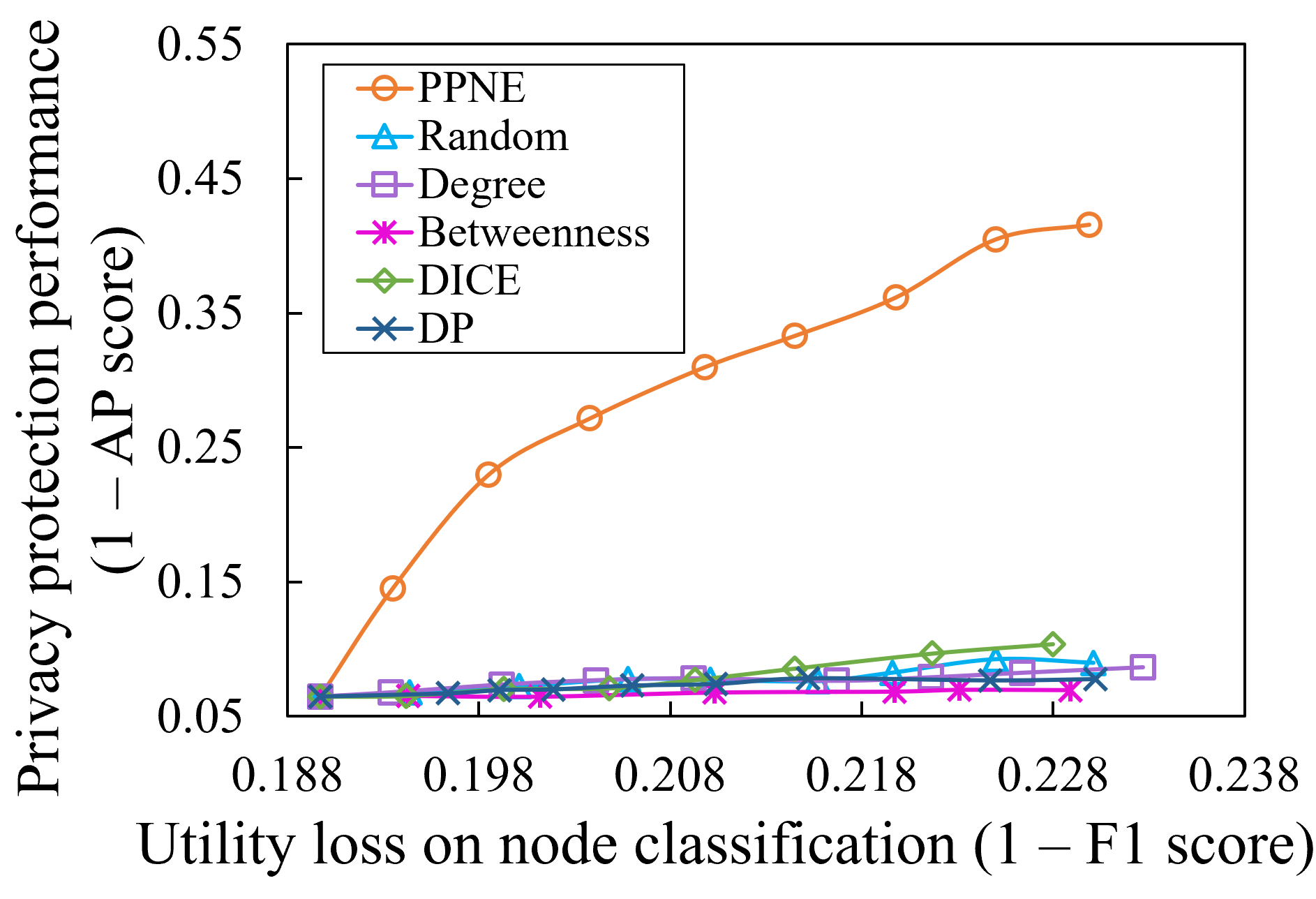}
		\caption{Scenario I on Classification}
		\label{fig:PubDWSINC}
	\end{subfigure}
	\quad 
	\begin{subfigure}[b]{0.23\textwidth}
		\includegraphics[width=\textwidth]{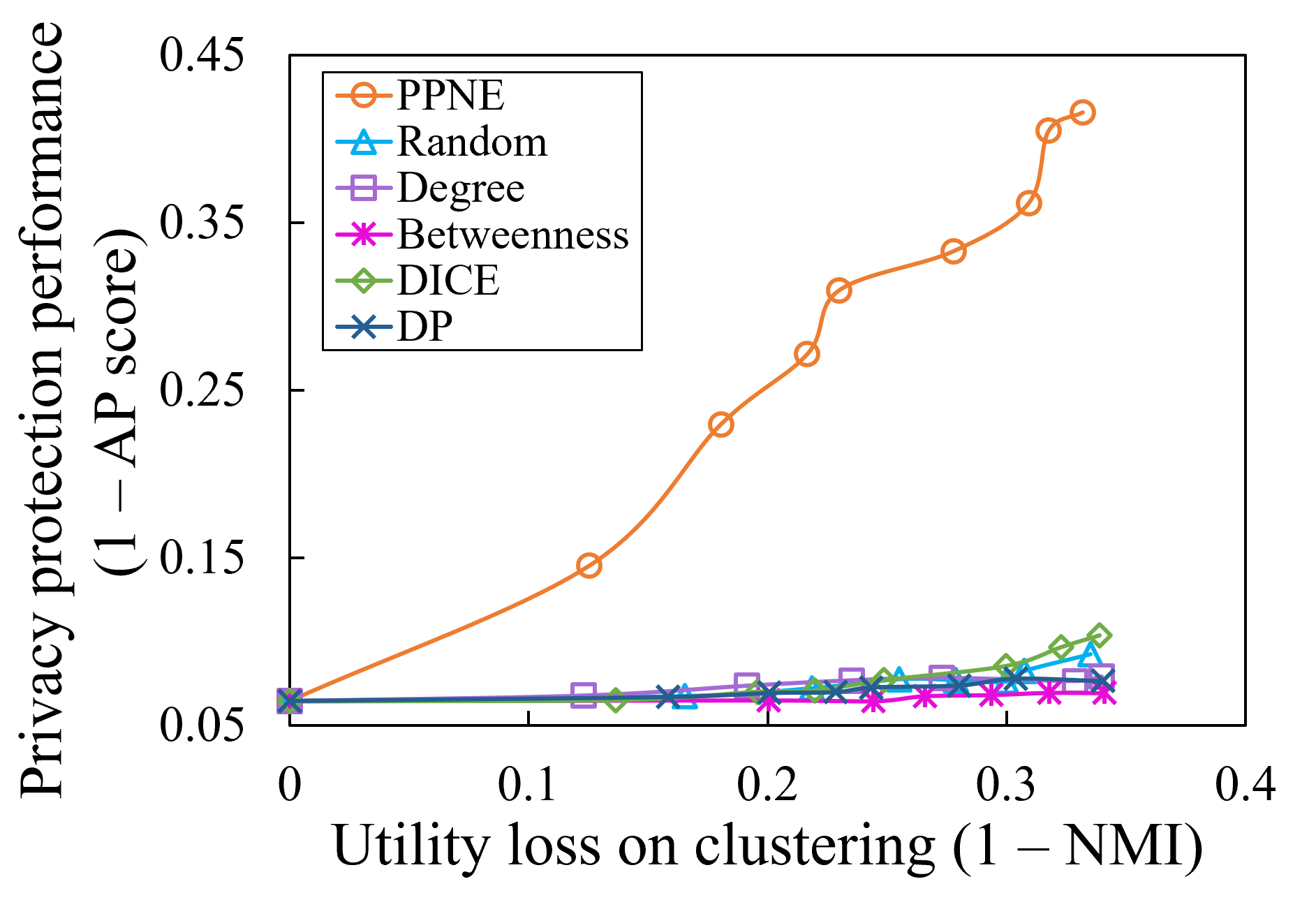}
		\caption{Scenario I on Clustering}
		\label{fig:PubDWSICL}
	\end{subfigure}
	\quad 
	\begin{subfigure}[b]{0.23\textwidth}
		\includegraphics[width=\textwidth]{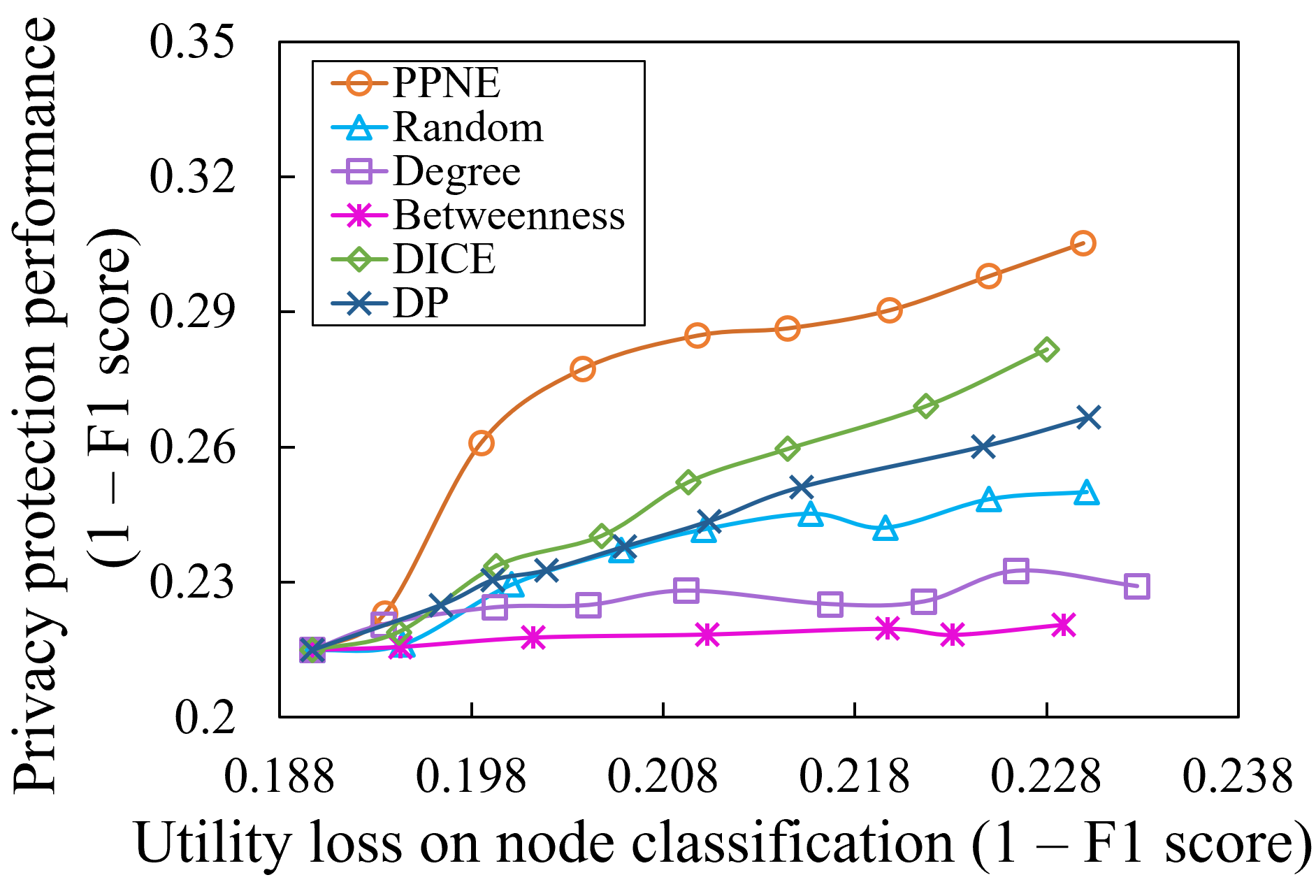}
		\caption{Scenario II on Classification}
		\label{fig:PubDWSIINC}
	\end{subfigure}
	\quad
	\begin{subfigure}[b]{0.23\textwidth}
		\includegraphics[width=\textwidth]{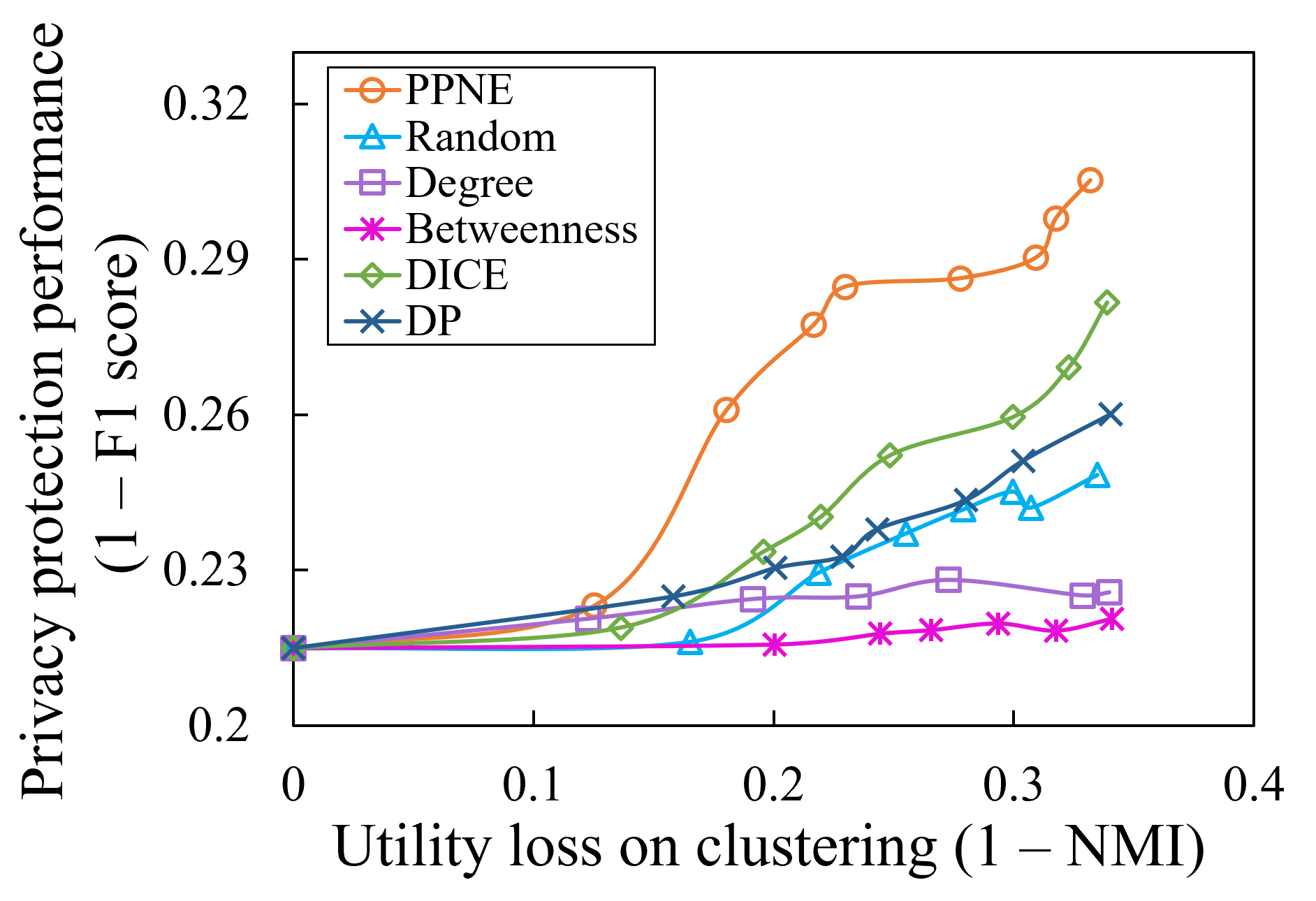}
		\caption{Scenario II on Clustering}
		\label{fig:PubDWSIICL}
	\end{subfigure}
	\vspace{-1em}
	\caption{Tradeoff on PubMed: privacy protection on private links and node classification/clustering performance (DeepWalk)}\label{fig:PubTradeOffDW}
	\vspace{-1em}
\end{figure*}

\begin{figure*}[b]
	\centering
	\begin{subfigure}[b]{0.23\textwidth}
		\includegraphics[width=\textwidth]{figures2/pubmed_line_scenario1_classification.png}
		\caption{Scenario I on Classification}
		\label{fig:PubLINESINC}
	\end{subfigure}
	\quad 
	\begin{subfigure}[b]{0.23\textwidth}
		\includegraphics[width=\textwidth]{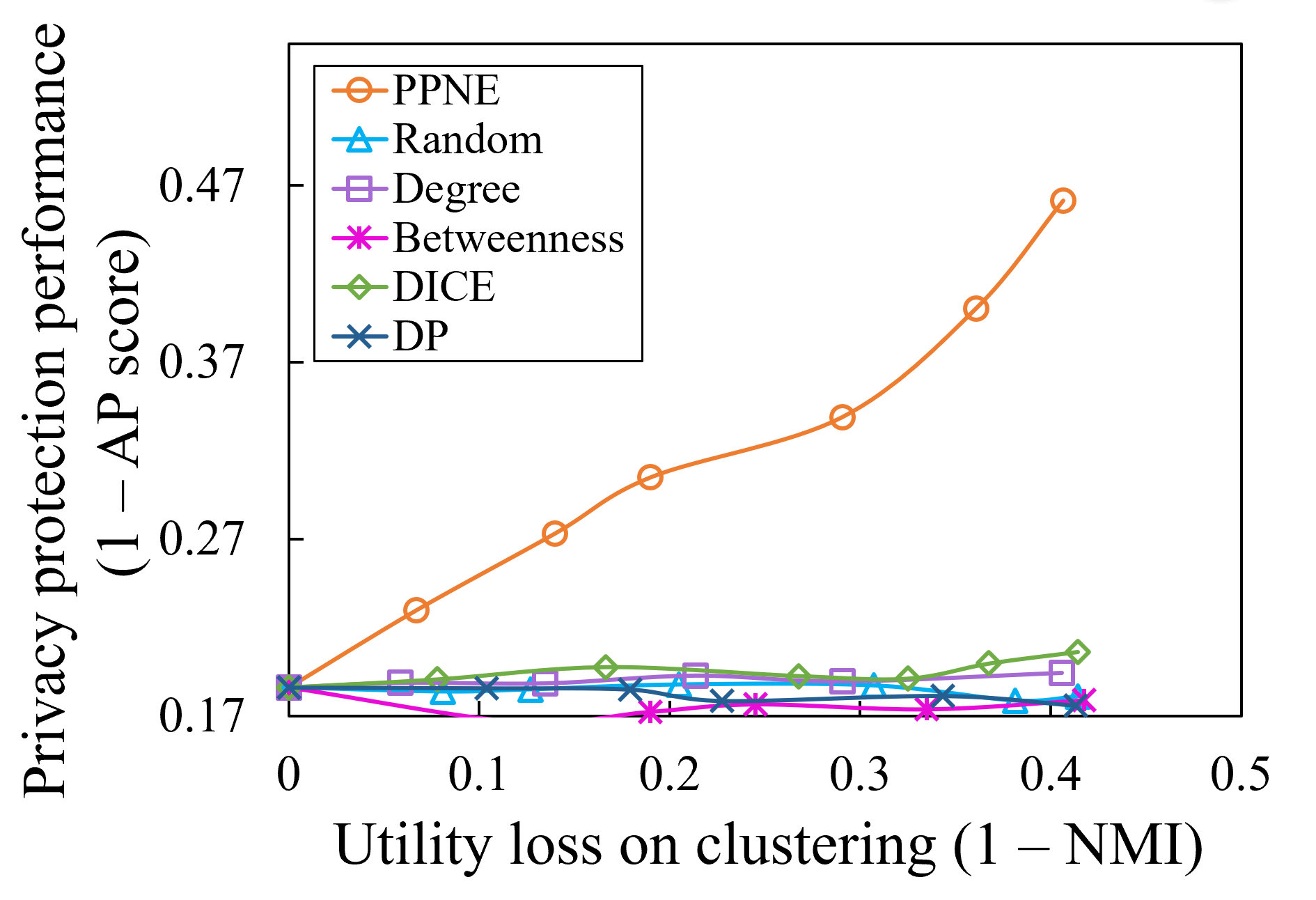}
		\caption{Scenario I on Clustering}
		\label{fig:PubLINESICL}
	\end{subfigure}
	\quad 
	\begin{subfigure}[b]{0.23\textwidth}
		\includegraphics[width=\textwidth]{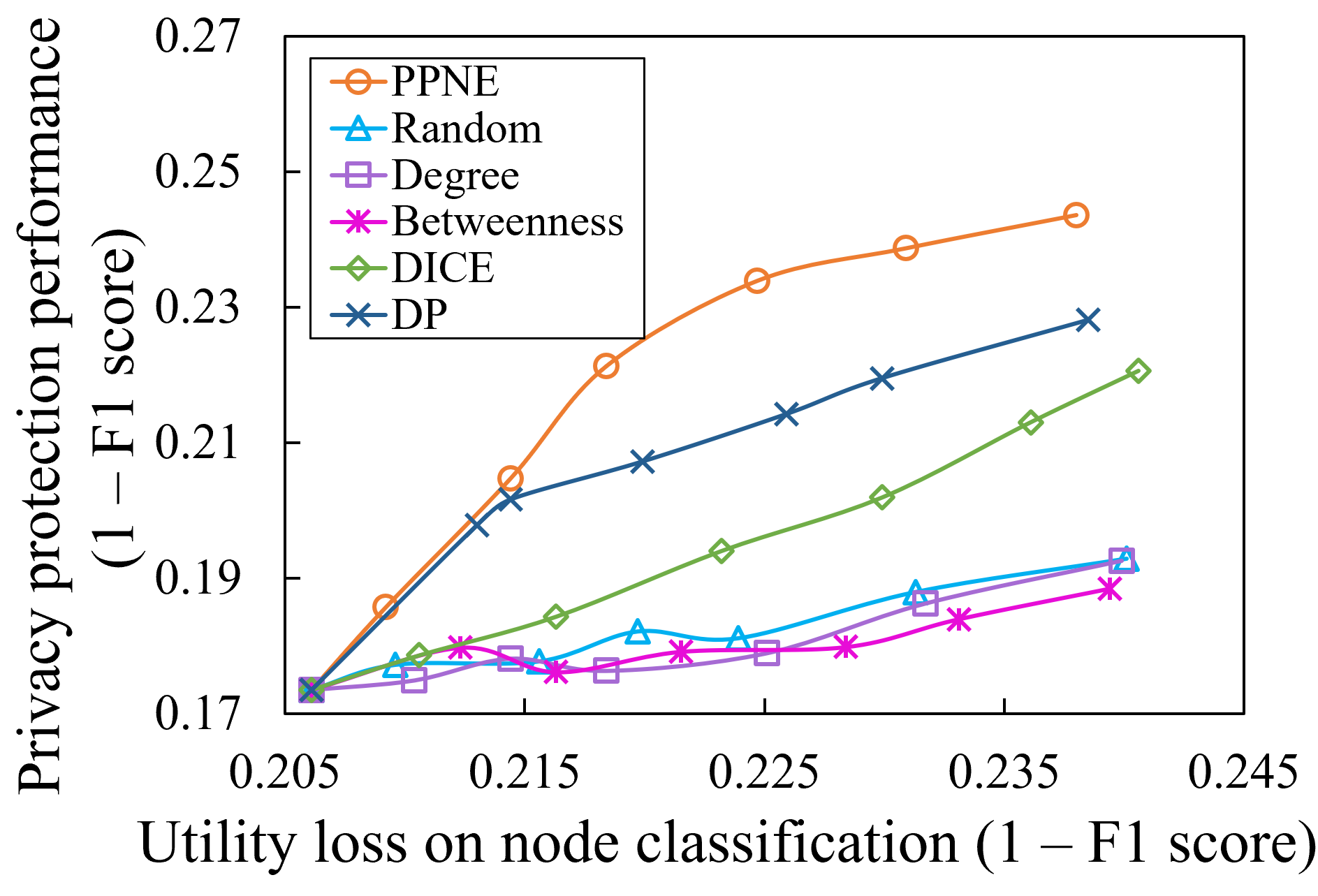}
		\caption{Scenario II on Classification}
		\label{fig:PubLINESIINC}
	\end{subfigure}
	\quad
	\begin{subfigure}[b]{0.23\textwidth}
		\includegraphics[width=\textwidth]{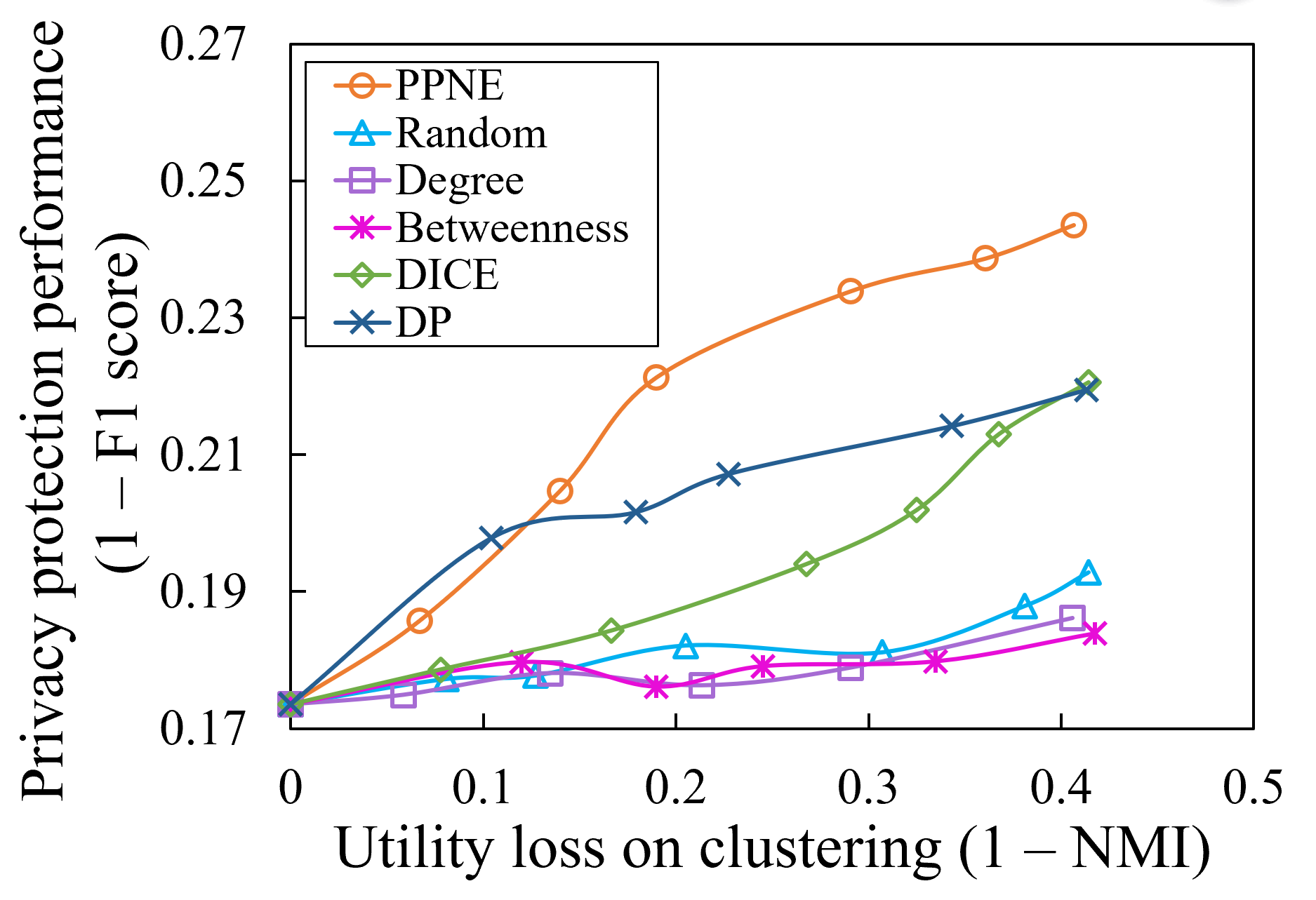}
		\caption{Scenario II on Clustering}
		\label{fig:PubLINESIICL}
	\end{subfigure}
	\vspace{-1em}
	\caption{Tradeoff on PubMed: privacy protection on private links and node classification/clustering performance (LINE)}
	\label{fig:PubTradeOffLINE}
	\vspace{-1em}
\end{figure*} 

\end{document}